\documentclass{amsart}  
\usepackage{mathabx,amsmath,amssymb,latexsym}
\usepackage{comment,nicefrac}
\usepackage{tikz,pgfplots}
\usepackage{hyperref}
\usetikzlibrary{arrows,snakes,backgrounds}
\usepackage[export]{adjustbox}
\theoremstyle{plain}             % oder andere
\newtheorem{theorem}{Theorem}[section]
\newtheorem{lemma}[theorem]{Lemma}
\newtheorem{corollary}[theorem]{Corollary}

\newtheorem{definition}[theorem]{Definition}
\newtheorem{remark}[theorem]{Remark}
\newtheorem{assumption}[theorem]{Assumption}
\usepackage[normalem]{ulem}
\usepackage{dsfont}
\newcommand{\be}{\begin{equation}}
\newcommand{\ee}{\end{equation}}
\newcommand{\Diff}{{\rm Diff}}
\newcommand{\KL}{{Karhunen-Lo\`{e}ve}}
\newcommand{\calD}{{\mathcal{D}}}
\newcommand{\calE}{{\mathcal{E}}}
\newcommand{\calF}{{\mathcal{F}}}
\newcommand{\calL}{{\mathcal{L}}}
\newcommand{\calR}{{\mathcal{R}}}
\newcommand{\calS}{{\mathcal{S}}}
\newcommand{\bcalS}{{\boldsymbol{\mathcal{S}}}}
\newcommand{\calX}{{\mathcal{X}}}
\newcommand{\bcalX}{{\boldsymbol{\mathcal{X}}}}
\newcommand{\calY}{{\mathcal{Y}}}
\newcommand{\bcalY}{{\boldsymbol{\mathcal{Y}}}}
\newcommand{\calZ}{{\mathcal{Z}}}

\newcommand{\calG}{{\mathcal{G}}}
\newcommand{\bcalG}{{\boldsymbol{\mathcal{G}}}}
\newcommand{\bG}{{\boldsymbol{G}}}

\newcommand{\bsPsi}{{\boldsymbol{\Psi}}}

\newcommand{\domain}{{D_\refd}}
\newcommand{\N}{\mathbb N}
\newcommand{\R}{\mathbb R}
\newcommand{\C}{\mathbb C}
\newcommand{\rF}{{\mathrm F}}
\newcommand{\rS}{{\mathrm S}}
\newcommand{\norm}[2][]{\|#2\|_{#1}}
\newcommand{\normlr}[2][]{\left\|#2\right\|_{#1}}
\newcommand{\set}[2]{\{#1 : #2\}}
\newcommand{\setlr}[2]{\left\{#1 : #2\right\}}
\newcommand{\dup}[2]{\langle #1,#2 \rangle}

\usepackage{enumitem}

\newcommand{\eps}{\varepsilon}
\newcommand{\isdef}{\mathrel{\mathrel{\mathop:}=}}

\newcommand{\trace}{\operatorname{tr}}
\renewcommand{\div}{\operatorname{div}}

\newcommand{\balpha}{{\boldsymbol{\alpha}}}

\newcommand{\bgamma}{{\boldsymbol{\gamma}}}

\newcommand{\bmc}{{\boldsymbol{c}}}

\newcommand{\bsnul}{{\boldsymbol{0}}}
\newcommand{\bsv} {{\boldsymbol{v}}}
\newcommand{\bsw} {{\boldsymbol{w}}}
\newcommand{\bsy} {{\boldsymbol{y}}}
\newcommand{\bsQ} {{\boldsymbol{Q}}}
\newcommand{\sym}{{\operatorname*{sym}}}
\newcommand{\refd}{{\operatorname*{ref}}}
\newcommand{\iso}{{\operatorname*{iso}}}

\renewcommand{\d}{\operatorname{d}\!}
\begin{document}
\title[Shape Operator Surrogates]{
Neural Shape Operator Surrogates -- 
\\ Expression Rate Bounds}
\author{Helmut Harbrecht}
\address{Helmut Harbrecht,
Departement Mathematik und Informatik,
Universit\"at Basel,
Spiegelgasse 1, 4051 Basel, Switzerland}
\email{helmut.harbrecht@unibas.ch}
\author{Christoph Schwab}
\address{Christoph Schwab,
Seminar for Applied Mathematics, 
ETH Z\"urich, 
R\"amistrasse 101, 8092 Z\"urich, Switzerland}
\email{schwab@math.ethz.ch}
\date{\today}
%==========================================================
\maketitle
%==========================================================
\begin{abstract}
We prove error bounds for operator surrogates 
of solution operators for partial differential and boundary integral equations
on families of domains
which are diffeomorphic 
to one common reference (or latent) 
domain $\domain$.
The pullback of the PDE to $\domain$ 
via affine-parametric shape encoding
produces a collection of 
holomorphic parametric PDEs on $\domain$.
Sufficient conditions for (uniformly with respect to the parameter)
well-posedness are given, 
implying existence, uniqueness and stability 
of parametric solution families on $\domain$.
We illustrate the abstract hypotheses by reviewing recent holomorphy results
for a suite of elliptic and parabolic PDEs.

Quantified parametric holomorphy 
implies existence of finite-parametric, discrete approximations of 
the parametric solution families with convergence rates
in terms of the number $N$ of parameters.
We obtain constructive proofs of existence
of Neural and Spectral Operator surrogates 
for the shape-to-solution maps 
with error bounds and convergence rate guarantees 
uniform on the collection of admissible shapes.
We admit principal-component shape encoders 
and frame decoders. 

Our results support in particular the (empirically reported) 
ability of neural operators to realize data-to-solution maps
for elliptic and parabolic PDEs and BIEs that generalize 
across parametric families of shapes.
\end{abstract}
%
%==========================================================
% \tableofcontents
%==========================================================
\section{Introduction}
%==========================================================
Numerous computational tasks in science and engineering 
require the repeated,
efficient numerical solutions of boundary value problems 
in physical domains of engineering interest. 
Frequently, a
particular operator equation (representing a mathematical model 
for the physical process under consideration) must be solved over 
\emph{collections of ``similar'' physical domains} and possibly also 
for large parametric sets of input data. 
Here, the notion ``similar'' could 
mean for example homeomorphic, or bi-Lipschitz equivalent.
Typical settings where this occurs are shape optimization, 
data-driven Bayesian inverse shape design, 
filtering and MCMC algorithms for shape identification.
Other applications include 
engineering design-, safety- and risk-analysis,
and uncertainty quantification.
Here, for a given physical model,
numerical solves of many instances of one operator equation 
-- usually in form of a partial differential equation (PDE) or 
a boundary integral equation (BIE) -- are required,
on collections of physical domains which are ``similar'' to each other.

A variety of techniques have been developed in recent years 
in order to re-use (parts of) earlier simulations of 
``neighboring'' models or problems.
We mention only operator preconditioning (\cite{HiptmOpPrec}),
multi-level sampling (\cite{GilesMLMCActa}),
techniques that can reduce total computing time for large batches 
of forward simulations significantly. 
See, e.g., \cite{li2023geometryinformedneuraloperatorlargescale}
for detailed numerical experiments from a problem for viscous, incompressible flow
in three space dimensions.

A novel paradigm which has emerged recently in Scientific Machine Learning
is the so-called \emph{operator learning}, 
see e.g.\ 
\cite{kovachki2024operatorlearningalgorithmsanalysis}
and the references there. 
Here, 
rather than re-running a numerical solver 
for each instance of the physical problem, 
one views the \emph{data-to-solution map}
(realized approximately, e.g.\ by a ``high-fidelity'' numerical solver) 
as a map between 
(infinite-dimensional) function spaces 
which should be ``emulated'' by a 
\emph{finite-parametric, numerical surrogate map}. 
In constructing such surrogate maps 
with certified performance on continua of inputs,
for reasons of computational efficiency, 
the so-called \emph{curse of dimensionality} must be overcome.
The relevance of operator learning stems from the fact that
numerous applications from engineering and the sciences require
large numbers of simulations of one physical model (typically a 
differential or integral equation) 
on families of homeomorphic, but distinct geometries. 
Typical examples are 
shape optimizations, 
shape identification, 
industrial design \cite{padula2023generativemodelsdeformationindustrial}
and, recently, 
also digital twins \cite{liu2024deepneuraloperatorenabled}.
Accordingly, 
recent years have seen significant 
algorithmic development of what could be termed 
``domain invariant'' operator networks. 
A (incomplete) list of references is 
\cite{cheng2024referenceneuraloperatorslearning,
duprez2025phifemfnonewapproachtrain,
goswami2022physicsinformeddeepneuraloperator,
DIVA,
LiHuangFNOGeo23,
li2023geometryinformedneuraloperatorlargescale,
loeffler2024graphfourierneuralkernels,
transolver,
taylor2025diffeomorphicneuraloperatorlearning,
yin2024dimonlearningsolutionoperators,
zeng2025pointcloudneuraloperator},
and further work mentioned in them.
These references focus on  ``computational'', algorithmic aspects,
i.e.\ they start from mathematical formulations 
and develop computational frameworks which allow, 
for suites of model problems, 
to realize 
the abovementioned generalization of domain-to-solution maps
across domain ensembles. 
See, for example, the discussion and examples in 
\cite{transolver,mousavi2025rignographbasedframeworkrobust,MIO,DistFunc,Zhao} 
and the references therein.

In this article, 
we propose a \emph{mathematical framework} for establishing
\emph{uniform approximation and generalization error bounds 
of neural and spectral operator networks for PDEs and BIEs} 
as in \cite{HSZ}
across ensembles of diffeomorphic domains and shapes.
Our framework is closely related conceptually to 
the recent ``DiMON'' in \cite{yin2024dimonlearningsolutionoperators}, 
and the ``Shape-DINO'' framework of \cite{gong2026shapederivativeinformedneuraloperators},
and references there.
We formulate a version of it and verify, based on our earlier work, 
applicability to a wide range of linear and nonlinear elliptic
and parabolic PDEs. A further novelty is
expression rate bounds for shape-to-solution neural operators
for \eqref{eq:model}.
This is based on our concept of 
{\em shape analyticity} or {\em shape holomorphy} 
of certain types of elliptic and parabolic PDEs \eqref{eq:model}
as well as of corresponding 
boundary integral equations \cite{AnDepBiOp,ShHolBIe,HenChS21},
developed by us and coauthors in recent years.
As we show here, the concept of \emph{shape analyticity} 
combined with recent resuts
\cite{HSZ} on expression-rate bounds for neural and spectral operator 
surrogates of holomorphic maps between function spaces,
will allow a mathematical analysis of expression rates.
Proposals for \emph{shape neural operators} have recently been 
developed e.g.\ in 
\cite{cheng2024referenceneuraloperatorslearning,
loeffler2024graphfourierneuralkernels,
yin2024dimonlearningsolutionoperators}
for (but not limited to) the expression of domain-to-solution maps
for well-posed operator equations
on parameter dependent domains $D_{{\bf y}}$, 
written in the generic form
\begin{equation}\label{eq:model}
%=================================
\mathrm{L}_{{\bf y}} u_{{\bf y}} = f_{{\bf y}} \text{ in } D_{{\bf y}}\,.
\end{equation}
For each parameter-instance ${\bf y}$, 
the (linear or nonlinear)
forward operator
$\mathrm{L} \in \calL_{\iso}(\calX_{\bf y} , \calY_{\bf y}')$ 
is assumed to be an isomorphism between suitable Banach 
spaces $\calX_{\bf y}$ and $\calY_{\bf y}'$, 
typically (tuples of) Sobolev spaces on the domain $D_{{\bf y}}$
(or on its boundary $\partial D_{{\bf y}}$), 
in which \eqref{eq:model} is well-posed.
For linear equations, \eqref{eq:model} corresponds to a
suitable variational formulation with parameter-dependent 
bilinear form $b_{\bf y}(\cdot,\cdot): \calX_{\bf y} \times \calY_{\bf y} \to \R$,
which satisfies \emph{uniform (with respect to $\bf y\in \square$) 
inf-sup conditions}.

Throughout, 
and similar to e.g.~\cite{yin2024dimonlearningsolutionoperators},
we assume the parameter ${\bf y}$ 
to have the structure
${\bf y}\in\square\isdef[-1,1]^\N$.
The sequence ${\bf y} \in \square$ 
is a (possibly high-dimensional) 
sequence of latent variables $(y_j)_{j\in\N}$
which encodes particular shapes and 
parametrizes the set $\bcalS$ of admissible shapes $D_{{\bf y}}$, 
i.e.
\begin{equation}\label{eq:S}
\bcalS = \{ D_{{\bf y}}: {\bf y} \in \square \} \,.
\end{equation}
Up to this point, \emph{no parametrization of $D_{{\bf y}}$ has been specified}.
One possible, diffeomorphic shape-parametrization 
will be specified in Assumption~\ref{ass:V}
in Section~\ref{sec:PrbFrm} ahead.

The abstract operator equation \eqref{eq:model} 
is subject to 
initial- and/or boundary conditions on $\partial D_{{\bf y}}$, 
with data collected in $f_{{\bf y}}$
so that a well-posed operator equation \eqref{eq:model} results
(examples are given ahead).
Well-posedness implies that 
for each instance of the data 
(which data includes also the domain $D\in \bcalS$),
there exists unique solution $u$ of \eqref{eq:model}.
We quantify the stability of \eqref{eq:model}
by assuming that there exists a constant $C_{\text{Stab}}$ 
such that
\[
\sup_{{\bf y}\in \square} \| \mathrm{L}^{-1} \|_{\calL( \calY_{\bf y}, \calX_{\bf y})} 
\leq 
C_{\text{Stab}}
\,. 
\]
The collections of solution- and data-spaces, 
$\calX_{\bf y}$ and $\calY_{\bf y}$, respectively,
for the formulation of the parametric model equation \eqref{eq:model} for ${\bf y}\in \square$
are denoted as
$$
\bcalX := \{ \calX_{\bf y} : {\bf y}\in \square\}\,, 
\quad 
\bcalY := \{ \calY_{\bf y} : {\bf y}\in \square\}\,.
$$
Typical choices are $\calX_{\bf y} = H^1_0(D_{\bf y})$,
e.g.\ for the Poisson equation in a parametric domain $D_{\bf y}$.
The corresponding collection of (parametric) 
shape-to-solution maps is denoted as
\[
\bcalG:     \bcalS\times \bcalY \to \bcalX: (D,f) \mapsto u \,,
\]
which is to say that
\[
\forall {\bf y}\in \square: \;\;
\calG_{{\bf y}}: \calS_{{\bf y}} \times \calY_{{\bf y}} \to \calX_{{\bf y}} : 
                 (D_{{\bf y}}, f_{{\bf y}}) \mapsto u_{{\bf y}}\,.
\]
For numerical stability of operator learning and training, 
we require
\emph{uniform well-posedness of the problems \eqref{eq:model}
      for all instances $D \in \bcalS$} of shapes.
Our aim is to construct for a given budget $N$ of NN parameters, 
sequences $ \{ \widetilde{\calG}_{N} \}_{N\in\N}$ 
of $N$-parametric surrogate solution operators  $\widetilde{\calG}_N$ 
together with bounds on the \emph{worst-case error} 
over the set $\bcalS$ of admissible shapes \eqref{eq:S}:
\[
\sup_{(D,f)\in \bcalS\times \bcalY} \big\| \calG(f) - \widetilde{\calG}_{N}(f)\big\|_{\calX}
=
\sup_{{\bf y}\in \square} 
\big\| \calG_{{\bf y}}(f_{{\bf y}}) - \widetilde{\calG}_{{\bf y},N}(f_{{\bf y}})\big\|_{\calX_{\bf y}}.
\]

We motivate the key ideas on a model linear elliptic 
divergence-form PDE \eqref{eq:model} in a parametric family of domains. 
Subsequently, 
we discuss a suite of possibly nonlinear, elliptic and 
parabolic PDEs which are covered by our abstract hypotheses.
Specifically, stationary Navier-Stokes equations, 
linear and nonlinear elastostatics,
boundary integral equations in acoustic scattering, 
linear parabolic IBVPs.
%%%%%%%%%%%%%%%%%%%%%%%%%%%%%%%%%%%%%%%%%%%%
\subsection{Contributions}
\label{sec:contr}
%%%%%%%%%%%%%%%%%%%%%%%%%%%%%%%%%%%%%%%%%%%%
We leverage \emph{shape-holomorphy} of solutions of elliptic and parabolic PDEs,
i.e.\ the holomorphic dependence of solutions on the shape of the physical domain 
in which the PDE (initial-)boundary value problem is to be solved.
For affine-parametric families of admissible shapes, 
we prove expression rate bounds of certain families of 
neural and spectral shape-to-solution surrogates.
Our contributions are as follows:

\noindent
1.\ 
We place ourselves in the mathematical operator learning framework of \cite{HSZ}.
This framework requires \emph{analytic data-to-solution maps} and
specifies particular (neural and spectral) ONet architectures, 
with PCA encoders and frame-decoders and 
convergence rate bounds in an abstract setting.
We verify, for a collection of representative PDEs from science and engineering, 
the assumptions underlying \cite{HSZ} which, in turn, imply existence of 
forward ONet architectures for a wide range of encoders and decoders (wavelets, 
PCAs, orthogonal polynomials, etc.).
Analyticity of the shape-to-solution map has, in recent years, been verified
for a wide range of linear and nonlinear differential- and integral equations.
We mention only 
\cite{AnDepBiOp,ShHolBIe,JDFH24,HenChS21} for boundary integral operators,
and 
\cite{Chernov,CSZ,ana,HSS,Valent} and the references there for PDEs.
As we show herein, for all these examples
\emph{neural operators} enable shape-transfer learning for PDEs, 
i.e.\ the generalization of PDE solutions for one ``nominal'' shape
to another, ``neighboring'' shape, without further training, as 
realized recently e.g.\ in \cite{transolver}.

\noindent
2.\  
We verify the analyticity hypotheses for parabolic evolution problems,
with shape-dependent linear principal part as considered in \cite{ana}, 
generating a $C_0$-semi\-group,
with possibly a nonlinear lower order term, in the classical framework 
(e.g.\ \cite{Pazy}), which was recently also studied in \cite{ana,HSS}.
We comment also in the possibility of long-term evolution prediction
in this setting.
%%%%%%%%%%%%%%%%%%%%%%%%%%%%%%%%%%%%%%%%%%%%
\subsection{Layout}
\label{sec:layout}
In Section~\ref{sec:Problemform}, 
we formalize the shape encoding and domain parametri\-zation 
upon which all subsequent results will be based.
Further, 
we introduce a suite of particular instances of \eqref{eq:model}
of possibly nonlinear elliptic and parabolic PDEs where we verify our abstract
hypotheses, and to which our results apply.
We state for each instance the relevant result on shape-holomorphy of 
parametric solution families.

In Section~\ref{sec:NeurSpcOpSr}, we introduce from \cite{HSZ} the 
main convergence rate bounds for neural and spectral operators
for holomorphic maps between Hilbert spaces, adapted to the present 
setting. 
We provide detailed discussions of shape encoders
and solution decoders which are admissible in our theory.

Section~\ref{sec:Conclusion} summarizes the main results, 
and indicates directions for further research, as well as 
references with numerical results.
%%%%%%%%%%%%%%%%%%%%%%%%%%%%%%%%%%%%%%%%%%%%
\subsection{Notation}
\label{sec:Notat}
Throughout the article, in order to avoid the repeated use of generic but 
unspecified constants, by \(C\lesssim D\) we mean that \(C\) can be bounded 
by a multiple of \(D\), independently of parameters which \(C\) and \(D\) may 
depend on. Obviously, \(C\gtrsim D\) is defined as \(D\lesssim C\) and 
\(C\sim D\) as \(C\lesssim D\) and \(C\gtrsim D\). For two Banach 
spaces $X$ and $Y$, we shall denote by $\calL(X,Y)$ the Banach 
space of bounded, linear operators $\mathrm{L}: X\to Y$.

For a bounded Lipschitz domain $D\subset  \R^n$, $n\geq 2$
(which is to say that the boundary $\partial D$ is locally a Lipschitz graph,
 \cite[Chpt.~1]{Grisvard85}), 
we introduce the set $\Diff(\overline{D})$ of diffeomorphisms from 
the closure $\overline{D}$ to $\R^n$. I.e., 
${\bf V}:\overline{D}\to \R^n$ belongs to $\Diff(\overline{D})$
if ${\bf V}\in C^1(\overline{D};\R^n)$ (continuously differentiable up to the boundary 
$\partial D$ and bijective from $\overline{D}$ to ${\bf V}(\overline{D})$).
When the Lipschitz domain $D\subset \R^n$ is connected,
we denote by ${\bf n}:\partial D \to \R^n$ the outward unit normal vector.
For $k\in \N_0$, $C^k(\overline{D};\R^n)$ denotes the space of $k$-times 
differentiable vector fields ${\bf V}$ 
whose derivatives are continuous up to $\partial D$.

By $\N_0^{\N}$ we denote the set of all sequences of 
non-negative inters. 
With $\calF$ we denote its subset of 
of sequences $\balpha = (\alpha_j)_{j\in\N}$ of finite total degree
$|\balpha| := \sum_{j\in\N} \alpha_j$, i.e.\ 
$\calF = \{ \balpha \in \N_0^{\N} : |\balpha| <\infty \}$.

%==========================================================
\section{Differential and integral equations on varying domains}
\label{sec:Problemform}
%==========================================================
Formulation of PDEs on families of domains
is well-established e.g. in the domain of shape-optimization
(e.g.~\cite{DelZol,SokZol} and references there).
Rather than focusing on evaluation of 
shape differentials which are key
in numerical shape optimization, however, 
we aim at mathematical statements on 
regularity and approximation rates of the domain-to-solution map. 
While being central to e.g.~shape uncertainty quantification 
and 
Bayesian shape identification (e.g.~\cite{HPS,vHScarab24,xiu1} and references there),
our purpose here is to infer expression rate and generalization error bounds for 
shape-to-solution learning for PDEs by deep neural operators.
To this end, 
we place ourselves in the mathematical operator learning framework of \cite{HSZ}.
This framework specifies particular (neural and spectral) ONet architectures, 
together with en- and decoders and convergence rate bounds in an abstract setting.
\emph{Domain parametrizations will be essential in this approach},
the parameters being latent variables in the learned shape-to-solution map.
%==========================================================
\subsection{Shape encoding by domain parametrization}
\label{sec:PrbFrm}
%==========================================================
Let $D_\refd\subset\mathbb{R}^d$ for \(d\in\N\) 
(of special interest are the cases \(d=2,3\)) denote a 
bounded ``reference'' or, in the parlance of operator learning,
latent domain with Lipschitz boundary \(\partial D_\refd\). 
We assume admissible shapes 
to be given via a \emph{parametric collection of diffeomorphisms} 
\[
\{ {\bf V}_{\bf y}: {\bf y} \in \square \} \subset \Diff(\overline{D_\refd};\R^n)\,.
\]
The family $\bcalS$ of admissible shapes is the collection of parametric domains 
\begin{equation}\label{eq:ParDom}
%====================================
\bcalS = \{ D_{{\bf y}}: {\bf y}\in \square \} \subset \R^n\,, 
\quad 
\mbox{where} 
\;\; 
D_{{\bf y}} \isdef{\bf V}_{{\bf y}}(D_\refd)\,, \quad {\bf y}\in \square \,.
\end{equation}
Specifically, 
for each parameter-instance ${\bf y} \in \square$,
${\bf V}_{{\bf y}} \in \Diff(\overline{D_\refd};\R^n)$ 
uniformly with respect to ${\bf y} \in \square$.
We consider sets $\square$ of 
scaled parameters ${\bf y}$ 
which are sequences 
\[
{\bf y}\in \square \isdef [-1,1]^{\N} \,.
\]
We endow $\square$ with the metric 
$d_\square(\cdot,\cdot): \square\times\square\to \R$
given by 
$$
\forall {\bf y},{\bf y}'\in \square: \quad 
d_\square({\bf y} , {\bf y}') := \sup_{k\in \N} |y_k - y'_k| 
\,.
$$
The Tikhonov theorem (see, e.g.,  \cite[page 143: Thm.\ 13]{Ke1955})
implies that $(\square, d_\square)$ is a compact metric space.
We endow $\square$ with a probability measure $\lambda = \lambda_1^{\otimes \infty}$,
i.e. with the countable product of the univariate 
Lebesgue probability measure $\frac{1}{2} \lambda_1$ on $[-1,1]$.

For every ${\bf y} \in \square$, ${\bf V}_{\bf y}\in \Diff(\overline{D_\refd};\R^n)$ 
is continuously differentiable with respect to ${\bf x}\in D_\refd$, up to $\partial D_\refd$.
We formalize this in the \emph{uniformity condition}
\begin{equation}\label{eq:uniformity}
%====================================
\sup_{{\bf y}\in \square}
\|{\bf V}_{{\bf y}}\|_{{C^1(\overline{D_{\bf y}};\mathbb{R}^d)}}
+
\|{\bf V}^{-1}_{{\bf y}}\|_{{C^1(\overline{D_{\bf y}};\mathbb{R}^d)}}
\leq C_\square
\,.
\end{equation}
The uniformity condition \eqref{eq:uniformity} 
ensures that there exist constants 
$0< \underline{\sigma} < \overline{\sigma}$
(depending on $C_\square$ in \eqref{eq:uniformity})
such that
for every ${\bf y}\in\square$ and for ${\bf x}\in\overline{D_\refd}$ 
the singular values of the Jacobian ${\bf J}_{{\bf y}}$ 
for the vector field \({\bf V}_{\bf y}\) 
satisfy
\begin{equation}\label{eq:Jbound}
%========================================
0<\underline{\sigma}
\leq
\min_{{\bf x}\in \overline{D_\refd}} \big\{\sigma\big({\bf J}_{{\bf y}}({\bf x})\big) \big\}
\leq
\max_{ {\bf x}\in \overline{D_\refd}} \big\{\sigma\big({\bf J}_{{\bf y}}({\bf x})\big) \big\}
\leq
\overline{\sigma}
<\infty\,.
\end{equation}
Throughout, 
we make the following assumption on affine-parametric encoder
structure of the vector field ${\bf V}$
through a weighted sum with respect to a 
sequence of `shape feature-vectors' 
$({\boldsymbol\varphi}_k)_{k\in\N}$.
\begin{assumption}\label{ass:V}
%====================================
The vector field ${\bf V}_{{\bf y}}$ 
admits an affine-parametric expansion 
\begin{equation}\label{eq:KL}
%=====================================
{\bf V}_{{\bf y}}({\bf x}) 
= 
{\bf V}_\bsnul({\bf x}) 
+ \sum_{k=1}^\infty w_k{\boldsymbol\varphi}_k({\bf x})y_k\,,
\quad {\bf x} \in D_\refd \,,
\end{equation}
with the parameter sequence
${\bf y} = (y_k)_{k\in\N} \in \square$,
shape-feature sequence
$({\boldsymbol\varphi}_k)_{k\in\N} \subset C^1(\overline{D_\refd};\R^n)$,
and weight sequence
$\bsw = (w_k)_{k\in\N}\in (0,\infty)^{\mathbb N}$ 
which is chosen such that for all $\eps > 0$ holds 
\begin{equation}\label{eq:wght}
\bsw^{1+\eps} := (w_k^{1+\eps})_{k\in\N} \in \ell^1({\mathbb N})\,.
\end{equation}
Then 
\begin{equation}\label{eq:gammak}
%=====================================
\bgamma 
=
(\gamma_k)_{k\in\N}
\isdef \big(w_k \|{\boldsymbol\varphi}_k\|_{C^{1}({{\overline{D_\refd}}};
	\mathbb{R}^{d})}\big)_{k\in\N}\in \ell^1(\N)\,.
\end{equation}
We denote the \(\ell^1\)-norm of this sequence 
by 
$ c_\bgamma\isdef\sum_{k=1}^\infty\gamma_k$.
\end{assumption}
The expansion \eqref{eq:KL} provides an affine-parametric
expansion of any admissible shape in $\bcalS$.
One important instance of \eqref{eq:KL} is 
orthonormality of the $({\boldsymbol\varphi}_k)_{k\in\N}$
in a suitable innerproduct $(\cdot,\cdot)_\bcalS$ 
on (an ambient space of) the admissible shapes $\bcalS$. 
Then, $y_k=( {\bf V}_{\bf y}, {\boldsymbol\varphi}_k)_{\bcalS}$
is the co-ordinate in e.g.~a PCA of $\bcalS$.
We develop a mathematical framework for
affine-parametric representations \eqref{eq:KL} of shapes 
in Sections~\ref{sec:ParEnc} and \ref{sec:BiOrth} ahead.

With the affine-parametric map \eqref{eq:KL}, we write
for all ${\bf y}\in \square$: ${\bf V}_{{\bf y}}: D_\refd \to D_{{\bf y}}$.
One refers to 
\[
{D}_{{\bf 0}} := {\bf V}_\bsnul(D_\refd)
\]
as ``nominal domain''. 
When ${\bf V}_{\bf 0} = {\bf Id}$, $D_{\bf 0} = D_\refd$.
Generally, 
$D_{\bf 0}$ may be realized as a physical domain, 
whereas
$D_\refd$ could be a ``latent'' reference domain
in the sense of continuum mechanics \cite{TruesdellV1}.
Note that the shape-encoding \eqref{eq:KL} is linear. 
Nonlinear encodings 
as e.g.\ in \cite{wen2025geometryawareoperatortransformer} 
could offer quantitative advantages.
\begin{remark}[Multi-resolution shape encoding]\label{rmk:MRAShape}
The shape-encoding in Assumption~\ref{ass:V} is in terms of 
``principal shape components'' ${\boldsymbol\varphi}_k$ 
which can be obtained for example as eigenbasis of 
empirical covariances from digitally acquired geometry snapshots. 
Other encoders with affine-parametric structure 
\eqref{eq:KL} are conceivable. 
\emph{Multi-resolution geometry representations} 
as introduced in Section~\ref{sec:Frames} ahead 
can be an alternative which avoids a numerical computation 
of the PCA and affords multiscale feature-resolution and 
geometry de-featuring by thresholding.
\end{remark}
We record properties of the affine-parametric shape representation \eqref{eq:KL}.
\begin{lemma}
Let ${\bf V}_\bsnul \in \Diff(\overline{D_\refd};\overline{D_{\bsnul}})$ 
be such that
\[
  \|{\bf V}_0\|_{{C^{1}(\overline{D_{\refd}};\mathbb{R}^d)}} 
  +
  \|{\bf V}_0^{-1}\|_{{C^{1}(\overline{D_{\bsnul}};\mathbb{R}^d)}}
  \le C\,.
\] 
Then, under Assumption~\ref{ass:V},
${\bf V}_{\bf y}$ 
for all ${\bf y}\in\square$ satisfy the uniformity bound \eqref{eq:uniformity},
provided that $c_\bgamma$ is sufficiently small.
\end{lemma}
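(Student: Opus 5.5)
We treat ${\bf V}_{\bf y}$ as a perturbation of ${\bf V}_\bsnul$ and argue in three steps: bound ${\bf V}_{\bf y}$ in $C^1$, show the Jacobian stays uniformly invertible, and then control the inverse map ${\bf V}_{\bf y}^{-1}$.

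\textbf{Step 1: the forward map.} Write ${\bf V}_{\bf y} = {\bf V}_\bsnul + {\bf P}_{\bf y}$ with ${\bf P}_{\bf y} := \sum_k w_k{\boldsymbol\varphi}_k y_k$. Since $|y_k|\le 1$, the series converges absolutely in $C^1(\overline{D_\refd};\R^d)$ by \eqref{eq:gammak}. This gives $\sup_{{\bf y}\in\square}\|{\bf P}_{\bf y}\|_{C^1}\le c_\bgamma$, and hence $\|{\bf V}_{\bf y}\|_{C^1}\le C + c_\bgamma$ uniformly in ${\bf y}$.

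\textbf{Step 2: pointwise invertibility of the Jacobian.} The Jacobian factors as ${\bf J}_{\bf y} = {\bf J}_\bsnul + D{\bf P}_{\bf y} = {\bf J}_\bsnul({\bf I} + {\bf J}_\bsnul^{-1}D{\bf P}_{\bf y})$. Because ${\bf V}_\bsnul^{-1}\in C^1$ with norm at most $C$, the chain rule gives $\|{\bf J}_\bsnul^{-1}({\bf x})\|\le C$ for every ${\bf x}\in\overline{D_\refd}$. Consequently $\|{\bf J}_\bsnul^{-1}D{\bf P}_{\bf y}\|\le C c_\bgamma$. If $Cc_\bgamma\le \tfrac12$, a Neumann series argument yields that ${\bf J}_{\bf y}({\bf x})$ is invertible with $\|{\bf J}_{\bf y}^{-1}({\bf x})\|\le 2C$, uniformly in ${\bf x}$ and ${\bf y}$. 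This is the singular value bound \eqref{eq:Jbound}.

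\textbf{Step 3: the inverse map.} Pointwise invertibility of the Jacobian only gives local invertibility, so the main obstacle is showing that ${\bf V}_{\bf y}$ is \emph{globally} injective on $\overline{D_\refd}$. Only then is ${\bf V}_{\bf y}^{-1}$ a well-defined map on $D_{\bf y}$, and its $C^1$ norm is controlled through $D({\bf V}_{\bf y}^{-1}) = {\bf J}_{\bf y}^{-1}\circ{\bf V}_{\bf y}^{-1}$ (bounded by $2C$) together with $\|{\bf V}_{\bf y}^{-1}\|_{C^0}\le \sup_{\overline{D_\refd}}|{\bf x}|$.

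For injectivity, I would first compose with ${\bf V}_\bsnul^{-1}$. Set $\boldsymbol\Phi_{\bf y} := {\bf V}_{\bf y}\circ{\bf V}_\bsnul^{-1} = {\bf Id} + {\bf P}_{\bf y}\circ{\bf V}_\bsnul^{-1}$ on $\overline{D_\bsnul}$. The perturbation has $C^1$ seminorm at most $Cc_\bgamma<1$. If $D_\bsnul$ is convex, the mean value inequality gives $|\boldsymbol\Phi_{\bf y}({\bf z})-\boldsymbol\Phi_{\bf y}({\bf z}')|\ge (1-Cc_\bgamma)|{\bf z}-{\bf z}'|$, so $\boldsymbol\Phi_{\bf y}$ is injective with Lipschitz inverse.

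For a general Lipschitz $D_\bsnul$, I would extend ${\bf P}_{\bf y}\circ{\bf V}_\bsnul^{-1}$ to all of $\R^d$ with $C^1$ norm bounded by a constant $C_E$ times the original, using a Whitney-type extension valid for Lipschitz domains. The extended perturbation $\bf Id$ plus this extension has Lipschitz constant below $1$ once $C_E C c_\bgamma<1$, and a contraction or Banach fixed point argument on $\R^d$ shows it is a global bi-Lipschitz $C^1$ diffeomorphism. Restricting to $\overline{D_\bsnul}$ gives injectivity of ${\bf V}_{\bf y} = \boldsymbol\Phi_{\bf y}\circ{\bf V}_\bsnul$. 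This is where the required smallness of $c_\bgamma$ is fixed, depending on $C$, $C_E$ and hence on $D_\refd$.

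Combining the three steps gives the uniformity bound \eqref{eq:uniformity} with $C_\square$ depending only on $C$, $c_\bgamma$ and $D_\refd$.
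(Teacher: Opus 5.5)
Your proposal is correct and shares the paper's backbone. The paper also passes to $\widetilde{\bf V}_{\bf y}={\bf V}_{\bf y}\circ{\bf V}_\bsnul^{-1}={\bf Id}+\sum_k w_k\widetilde{\boldsymbol\varphi}_k y_k$ on $D_\bsnul$. It shows that the Jacobian ${\bf I}+\sum_k w_k\widetilde{\boldsymbol\varphi}_k' y_k$ is uniformly nonsingular once $c_{\widetilde{\boldsymbol\gamma}}<1$, where $c_{\widetilde{\boldsymbol\gamma}}\lesssim c_\bgamma$, and it recovers the inverse from ${\bf V}_{\bf y}^{-1}={\bf V}_\bsnul^{-1}\circ\widetilde{\bf V}_{\bf y}^{-1}$. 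Your Step~2, the factorization ${\bf J}_{\bf y}={\bf J}_\bsnul({\bf I}+{\bf J}_\bsnul^{-1}D{\bf P}_{\bf y})$ with a Neumann series, is the same computation written on $D_\refd$. Your factor $\|{\bf J}_\bsnul^{-1}\|\le C$ is the one the chain rule actually produces. The paper writes $\|{\bf V}_\bsnul\|_{C^1}$ where $\|{\bf V}_\bsnul^{-1}\|_{C^1}$ is meant, which is harmless since both are at most $C$.

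The genuine difference is your Step~3. The paper infers invertibility of $\widetilde{\bf V}_{\bf y}$ directly from the Jacobian bound, together with the unproved remark that these maps are isotopic to the identity. That only yields local invertibility. For non-convex $D_\bsnul$ the condition $c_{\widetilde{\boldsymbol\gamma}}<1$ alone cannot suffice. Take a C-shaped domain whose two ends are $\epsilon$ apart, and a displacement in a fixed direction proportional to arc length along the C. Its sup norm and derivative are $O(\epsilon)$, yet it pushes one end onto the other.

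You correctly identify global injectivity as the real issue and prove it. For convex $D_\bsnul$ you use the mean value inequality; in general you use a bounded extension to $\R^d$ followed by the contraction argument. So the dependence of your threshold for $c_\bgamma$ on the geometry of $D_\bsnul$ is necessary, not an artifact, and the lemma's ``sufficiently small'' accommodates it.

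A lighter version of Step~3 avoids extension operators altogether. A bounded connected Lipschitz domain is quasiconvex: any ${\bf z},{\bf z}'\in D_\bsnul$ are joined by a path in $D_\bsnul$ of length at most $c_D|{\bf z}-{\bf z}'|$. Integrating the derivative of ${\bf P}_{\bf y}\circ{\bf V}_\bsnul^{-1}$ along this path gives $|\boldsymbol\Phi_{\bf y}({\bf z})-\boldsymbol\Phi_{\bf y}({\bf z}')|\ge(1-c_D C c_\bgamma)|{\bf z}-{\bf z}'|$ directly. This bound extends to $\overline{D_\bsnul}$ by continuity.
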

\begin{proof}
All shapes ${\bf V}_{\bf y}$ are images of ${D}_{{\bf 0}}$ under a 
family of diffeomorphisms $\widetilde{\bf V}_{\bf y}$ which are isotopic
to the identity on ${D}_{{\bf 0}}$.
The assumption ${\bf V}_\bsnul \in \Diff(\overline{D_\refd};\overline{D_{\bsnul}})$
implies that ${D}_{{\bf 0}}$ is a bounded Lipschitz domain,
and 
\[
\widetilde{\bf V}_{{\bf y}}({\bf x}) 
= 
{\bf x} +
\sum_{k=1}^\infty w_k\widetilde{\boldsymbol\varphi}_k({\bf x})y_k\,,
	\quad {\bf x}\in {D}_{{\bf 0}}, \; {\bf y}\in \square \,.
\]
Here, for $k\in \N$,
$
\widetilde{\boldsymbol\varphi}_k = {\boldsymbol\varphi}_k\circ {\bf V}_\bsnul^{-1}
$
and
we set 
$\widetilde{\gamma}_k
:= w_k\|\widetilde{\boldsymbol\varphi}_k \|_{{C^1(\overline{D_\bsnul};\mathbb{R}^d)}}$. 
For any ${\bf y}\in \square$, 
the deformation field 
$\widetilde{\bf V}_{{\bf y}} := {\bf V}_{\bf y} \circ {\bf V}_\bsnul^{-1}  
                              : {D}_{{\bf 0}} \to D_{\bf y}$ 
is invertible since the parametric Jacobian
\[
\widetilde{\bf J}_{{\bf y}}({\bf x})
= 
{\bf I} +
\sum_{k=1}^\infty w_k\widetilde{\boldsymbol\varphi}_k'({\bf x})y_k\,,
	\quad {\bf x}\in {D}_{{\bf 0}}\,,
\]
is uniformly bounded
\[
0 
\le 1-\sum_{k=1}^\infty \widetilde{\gamma}_k 
\le \|\widetilde{\bf J}_{{\bf y}}\|_{C^0(\overline{D_{\bf 0}};\mathbb{R}^{d\times d})} 
\le 1+\sum_{k=1}^\infty \widetilde{\gamma}_k < \infty
\]
provided that 
\[
  c_{\widetilde{\boldsymbol\gamma}} := \sum_{k=1}^\infty \widetilde{\gamma}_k < 1\,.
\]
Since $\widetilde\gamma_k\le \gamma_k\|{\bf V}_\bsnul \|_{{C^1(\overline{D_\refd};\mathbb{R}^d)}}$
and hence 
$c_{\widetilde{\boldsymbol\gamma}}
 \le c_{\boldsymbol\gamma} \|{\bf V}_\bsnul\|_{C^1(\overline{D_\refd};\mathbb{R}^d)}$, 
the claim follows in view of
\[
  {\bf V}^{-1}_{{\bf y}} = {\bf V}_\bsnul^{-1} \circ \widetilde{\bf V}^{-1}_{{\bf y}}:
    D_{\bf y} \to D_\refd \,.
\]
\end{proof}

In order to ensure that the model problem \eqref{eq:model} is 
well-defined for almost every \({\bf y}\in\square\), we consider the 
${\bf x}$-dependent input data, like the coefficients of the differential 
operator $\mathcal{L}$ and the right-hand side $f$, 
to be defined on the \emph{hold-all} domain $\mathcal{D}$, 
\begin{equation}\label{eq:holdall}
%=======================================
\overline{\bigcup_{{\bf y}\in\square}D_{\bf y}} \subseteq \mathcal{D} \subset \R^n\,.
\end{equation}
As in e.g.~\cite{yin2024dimonlearningsolutionoperators}, we
analyse the smoothness of parametric solution families of \eqref{eq:model},
$u_{\bf y} \in D_{\bf y}$ with respect to the parameter ${\bf y}$
via the pull-back of the partial differential equation \eqref{eq:model} 
and its parametric solution $u_{\bf y}$ to the reference domain $D_{\refd}$. 
Then, for linear elliptic or parabolic PDEs 
one obtains for the parametric solution 
$\hat{u}_{\bf y} = (u\circ {\bf V}_{\bf y}^{-1}): 
D_{\refd}\to \mathbb{R}$ an analytic estimate of the following type: 
\begin{equation}\label{eq:regu}
%=====================================
\forall {\bf y} \in \square\; 
\forall \balpha \in \calF: \quad 
\big\|\partial^\balpha_{\bf y}\hat{u}_{\bf y} \big\|_{H^1(D_\refd)}
\lesssim |\balpha|!\rho^{|\balpha|}\bgamma^\balpha 
\end{equation}
Here, $\rho > 0$ is an appropriate constant and $V$ 
denotes a Hilbert space on $D_{\refd}$. 
The roadmap for the verification of such \emph{growth-bounds}
comprises roughly the following three steps:
\begin{enumerate}
\item
The specific, affine structure of the domain deformation 
field \eqref{eq:KL} implies that the parameter-to-domain 
map is analytic.
\item
One verifies that the domain-to-data map is analytic. 
\item
In $D_\refd$,
the data-to-solution map has also to be analytic.
The verification of analyticity 
depends on the particular boundary value 
problem under consideration. 
Below, we indicate a wide range
of linear and certain nonlinear PDEs where this holds, 
and which are hence within the present framework.
\end{enumerate}
As shown in \cite[Sct.~2.4]{HSS}, 
then the parameter-to-solution map,
which is given as the composition of these three maps,
is also analytic and satisfies \eqref{eq:regu}.

The analytic estimate \eqref{eq:regu} for 
the parameter-dependence is the key estimate for 
deriving approximation rate results for 
gpc, QMC, sparse-grid and neural approximation methods,
for the solution to \eqref{eq:model} 
as it gives a guideline how to 
construct appropriate approximation spaces with
respect to the high-dimensional parameter ${\bf y}$. 
%========================================
\subsection{Examples of operator equations}
\label{sec:Expls}
%========================================
In the following, we present several examples 
and the respective shape-holomorphy 
results for different applications.
%=======================================
\subsubsection{Poisson equation}
\label{sec:Poisson}
%=======================================
As a standard instance of \eqref{eq:model},
we consider diffusion in 
a homogeneous, isotropic medium, i.e.\ the Poisson equation,
occupying a domain $D_{\bf y}$ 
with homogeneous Dirichlet boundary conditions
\begin{equation}\label{eq:Poisson}
%======================================
  -\Delta u_{\bf y} = f\ \text{in}\ D_{\bf y}\,, 
  \quad u_{\bf y} = 0\ \text{on}\ \partial D_{\bf y}\,.
\end{equation}
This problem has been considered first in \cite{xiu2,xiu1}
and analyzed in \cite{CaNT,HPS,H3S}. The main tool used 
in the analysis for the model problem \eqref{eq:Poisson} 
is the one-to-one correspondence between the problem 
on the actual realization $D_{\bf y}$ and its pull back 
to the reference domain $D_\refd$. 
The equivalence between those two problems is described by the vector 
field ${\bf V}_{\bf y} : D_\refd \to D_{\bf y} = {\bf V}_{\bf y}(D_\refd)$
as in \eqref{eq:KL}.

In the sequel, 
we choose the Sobolev spaces \( \calX_{\bf y} = H^1_0\big(D_{\bf y} \big)\) 
and \( V = H^1_0(D_\refd)\),
equipped with the norms 
\(
\|\cdot\|_{H^1(D_{\bf y})}\isdef\|\nabla\cdot\|_{L^2(D_{\bf y};\mathbb{R}^d)}
\text{ and }
\|\cdot\|_{H^1(D_\refd)}\isdef\|\nabla\cdot\|_{L^2(D_\refd;\mathbb{R}^d)},
\)
respectively. 
Then, 
for given \({\bf y}\in\square\) and for ${\bf V}_{\bf y}$ as in \eqref{eq:KL}, 
the variational formulation 
for the model problem \eqref{eq:Poisson} 
reads as follows: 
Find \(u_{\bf y} \in H^1_0\big(D_{\bf y} \big)\) 
such that
\[
\int_{D_{\bf y} }\langle\nabla u_{\bf y},\nabla v\rangle\d{\bf x}
=\int_{D_{\bf y}} fv\d{\bf x}\quad\text{for all }v\in H^1_0\big(D_{\bf y} \big)\,.
\]
Thus, with 
\begin{equation}\label{eq:ParCoef}
{\bf A}^{\refd}_{\bf y}({\bf x})
\isdef\big({\bf J}_{\bf y}({\bf x})^\intercal{\bf J}_{\bf y}({\bf x})\big)^{-1}
\det{\bf J}_{\bf y}({\bf x})\,,\quad {\bf x}\in D_\refd \,,
\end{equation}
and
\begin{equation}\label{eq:ParRHS}
f^{\refd}_{\bf y}({\bf x})\isdef (f\circ{\bf V}_{\bf y} )({\bf x})\det{\bf J}_{\bf y}({\bf x})\,,
\quad {\bf x}\in D_\refd \,,
\end{equation}
we obtain the following 
\emph{pulled-back variational formulation}
with respect to the reference domain: 
For ${\bf y} \in \square$,
find $\hat{u}_{\bf y} \isdef u_{\bf y}\circ{\bf V}^{-1}_{\bf y}\in H^1_0(D_\refd)$
such that 
\begin{equation}\label{eq:varform}
%=====================================
\int_{D_\refd}\langle {\bf A}^{\refd}_{\bf y}\hat{\nabla}\hat{u}_{\bf y},\hat{\nabla} v\rangle\d{\bf x}
=
\int_{D_\refd} f^{\refd}_{\bf y} v \d{\bf x}\quad
\text{for all } v \in H^1_0(D_\refd)\,.
\end{equation}
Here and in the following, $\hat{\nabla}$ denotes the 
gradient with respect to the reference coordinates $\hat{{\bf x}}\in D_\refd$, and
\(\langle\cdot,\cdot\rangle\) the canonical inner product for \(\mathbb{R}^d\). 
In view of Assumption~\ref{ass:V}, this problem is stable and 
uniquely solvable for all parameter values ${\bf y}\in\square$.

To derive from \eqref{eq:varform} regularity results for 
$\hat{u}_{\bf y} \in H_0^1(D_{\refd})$ with respect to the 
parameter ${\bf y}\in\square$, 
one requires assumptions on the regularity 
of the right-hand side \(f^{\refd}_{\bf y}\in H^{-1}(D_{\refd})\)
with respect to this parameter. 
Hence, we have to assume that \(f\) is a smooth function. 
Under this condition, one has 
in accordance with \cite[Thm.~5]{HPS} the following result:

\begin{theorem}\label{thm:regu}
%=====================================
Let $f\in C^\infty(\mathcal{D})$ be an analytic function 
on the hold-all $\mathcal{D}$ defined in \eqref{eq:holdall}.
Then, the derivatives of the solution \(u\) to the variational 
problem \eqref{eq:varform} satisfy \eqref{eq:regu}
for some constant $\rho>0$.
\end{theorem}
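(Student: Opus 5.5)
The plan is to follow the three-step roadmap stated before the examples: parameter-to-data analyticity, then data-to-solution analyticity, combined through a holomorphic-extension argument. I would not bound derivatives by direct induction. Instead I would extend ${\bf y}$ to complex values and obtain \eqref{eq:regu} from Cauchy estimates on polydiscs.

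\emph{Step 1 (complexification of the geometry).} For ${\bf z}\in\C^\N$ with $\sup_k \mathrm{dist}(z_k,[-1,1])$ small, define ${\bf V}_{\bf z}$ by \eqref{eq:KL}. Its Jacobian is
\[
{\bf J}_{\bf z}={\bf J}_\bsnul+\sum_k w_k{\boldsymbol\varphi}_k' z_k ,
\]
which is affine in ${\bf z}$ and converges in $C^0$ because $\bgamma\in\ell^1$. Fix $\rho>0$ (to be chosen depending on $c_\bgamma$, $\underline\sigma$, $\overline\sigma$). For ${\bf y}\in\square$ and $|z_k-y_k|\le \rho\gamma_k$, a Neumann-series perturbation of \eqref{eq:Jbound} gives $\|{\bf J}_{\bf z}-{\bf J}_{\bf y}\|_{C^0}\le \rho c_\bgamma$. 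If $\rho c_\bgamma$ is a small fraction of $\underline\sigma$, then ${\bf J}_{\bf z}$ stays invertible, $\det{\bf J}_{\bf z}$ stays bounded away from $0$ with positive real part, and ${\bf A}^\refd_{\bf z}$ from \eqref{eq:ParCoef} is holomorphic in each $z_k$. Moreover $\Re\,{\bf A}^\refd_{\bf z}$ remains uniformly positive definite, since it is a small complex perturbation of the real uniformly elliptic ${\bf A}^\refd_{\bf y}$.

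\emph{Step 2 (right-hand side).} Since $f$ is real analytic on the hold-all $\mathcal D$, it extends holomorphically to a complex neighbourhood of $\overline{\mathcal D}$. Then $f\circ{\bf V}_{\bf z}$ is well defined and holomorphic for the complex perturbations above, provided $\rho$ is also small relative to that neighbourhood. This uses $\|{\bf V}_{\bf z}-{\bf V}_{\bf y}\|_{C^0}\le\rho c_\bgamma$. Hence $f^\refd_{\bf z}$ in \eqref{eq:ParRHS} is holomorphic with values in $L^2(D_\refd)\subset H^{-1}(D_\refd)$ and uniformly bounded there.

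\emph{Step 3 (solution and Cauchy estimates).} By Lax--Milgram for the complexified sesquilinear form, which is coercive through $\Re{\bf A}^\refd_{\bf z}$, $\hat u_{\bf z}$ exists and satisfies $\|\hat u_{\bf z}\|_{H^1}\le C$ uniformly on the union of the polydiscs. It is holomorphic because the map ${\bf z}\mapsto(\text{form},\text{rhs})$ is holomorphic and inversion is holomorphic on isomorphisms. Now fix ${\bf y}$ and $\balpha$ with support $\mathbb{S}$. Apply the Cauchy integral formula on the polydisc $|z_k-y_k|\le r_k$ for $k\in\mathbb{S}$, with $r_k=\rho\gamma_k\alpha_k/|\balpha|$, so that $\sum r_k/\gamma_k\le\rho$ stays in the admissible set. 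This gives
\[
\|\partial^\balpha\hat u_{\bf y}\|\le C\,\balpha!\prod_k r_k^{-\alpha_k}
= C\,\balpha!\,|\balpha|^{|\balpha|}\prod_k \alpha_k^{-\alpha_k}(\rho^{-1}\bgamma)^{\balpha}.
\]
Using $\balpha!\le\prod\alpha_k^{\alpha_k}$ and $|\balpha|^{|\balpha|}\le e^{|\balpha|}|\balpha|!$, this becomes $\lesssim|\balpha|!(e/\rho)^{|\balpha|}\bgamma^\balpha$, which is \eqref{eq:regu} after renaming $\rho$.

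The main obstacle is Step 1 with the anisotropic admissible set $\sum_k|z_k-y_k|/\gamma_k\le\rho$. One must control complex perturbations of $({\bf J}^\intercal{\bf J})^{-1}\det{\bf J}$ uniformly in ${\bf x}$ and ${\bf y}$. In particular, the coercivity of $\Re{\bf A}^\refd_{\bf z}$ needs an explicit smallness bound in terms of $\underline\sigma$, $\overline\sigma$ and $\rho c_\bgamma$. Alternatively one can cite \cite[Thm.~5]{HPS}, where these estimates are carried out.
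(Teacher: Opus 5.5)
Your Step 3 does not yield \eqref{eq:regu}, because the polydiscs are scaled the wrong way. With $r_k=\rho\gamma_k\alpha_k/|\balpha|$ the Cauchy estimate gives
\[
\balpha!\prod_{k\in\mathbb{S}}r_k^{-\alpha_k}
=\balpha!\,|\balpha|^{|\balpha|}\prod_{k\in\mathbb{S}}\alpha_k^{-\alpha_k}\,\rho^{-|\balpha|}\,\bgamma^{-\balpha}.
\]
The factor is therefore $\bgamma^{-\balpha}$, which grows as $\gamma_k\to 0$. Writing it as $(\rho^{-1}\bgamma)^{\balpha}$ in your last line is an algebra slip that hides the problem.

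The same inversion is built into your admissible sets: $|z_k-y_k|\le\rho\gamma_k$ in Step 1, and $\sum_k|z_k-y_k|/\gamma_k\le\rho$ at the end. Both shrink exactly in the coordinates on which the geometry depends only weakly. Cauchy estimates give small derivatives only from large radii, so this is the opposite of what is needed. The quantity that actually controls the complex perturbation is $\|{\bf J}_{\bf z}-{\bf J}_{\bf y}\|_{C^0}+\|{\bf V}_{\bf z}-{\bf V}_{\bf y}\|_{C^0}\le 2\sum_k\gamma_k|z_k-y_k|$, since $\gamma_k=w_k\|{\boldsymbol\varphi}_k\|_{C^1}$.

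The repair is to work on $\{{\bf z}:\sum_k\gamma_k|z_k-y_k|\le\rho\}$, in which $z_k$ may move a distance of order $\rho/\gamma_k$. Take $\rho$ small relative to $\underline\sigma$, $\overline\sigma$, and to the width of a complex neighbourhood of the compact set $\overline{\bigcup_{{\bf y}}D_{\bf y}}\subset\mathcal{D}$ to which $f$ extends holomorphically. Note this should be a neighbourhood of that compact set, not of $\overline{\mathcal{D}}$, since analyticity is only assumed on $\mathcal{D}$. Choosing $r_k=\rho\alpha_k/(\gamma_k|\balpha|)$ for $k\in\mathbb{S}$ gives $\sum_k\gamma_k r_k=\rho$, and
\[
\|\partial^{\balpha}_{\bf y}\hat u_{\bf y}\|_{H^1(D_{\refd})}
\le C\,\balpha!\,|\balpha|^{|\balpha|}\prod_{k\in\mathbb{S}}\alpha_k^{-\alpha_k}\,\rho^{-|\balpha|}\,\bgamma^{\balpha}
\le C\,|\balpha|!\,(e/\rho)^{|\balpha|}\,\bgamma^{\balpha}.
\]
With this change the rest of your argument goes through: coercivity via $\operatorname{Re}{\bf A}^{\refd}_{\bf z}$, the uniform $H^1$ bound, and holomorphy of ${\bf z}\mapsto\hat u_{\bf z}$.

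It is then a genuinely different route from the paper's. The paper composes real-analytic maps (parameter-to-data, Neumann-series inversion, bilinear application) and defers the bookkeeping of the weights $\bgamma^{\balpha}$ to the composition lemma of \cite[Sct.~2.4]{HSS}. Your corrected Cauchy-estimate route instead reads those weights off directly from the shape of the holomorphy domain.
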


\begin{proof}
Let us sketch an argument to obtain statements like Theorem~\ref{thm:regu}. 
To this end, consider the Hilbert space $V=H^1_0(D_\refd)$ and its dual 
$V' = H^{-1}(D_\refd)$. 
Also, for a symmetric, positive definite coefficient matrix
${\bf A}\in L^\infty(D_\refd; \mathbb{R}^{d\times d})$ 
and some constants $0<a_- \leq a_+ <\infty$, 
we define the set of admissible coefficient functions by
\begin{equation}\label{eq:Da+a-}
\begin{aligned}
\calD(a_-,a_+)
&:= \big\{ {\bf A} \in L^\infty(D_\refd, \mathbb{R}^{d\times d}_{\sym}) 
\mid \\
&\qquad\qquad\forall \boldsymbol\xi \in \mathbb{R}^d: 
a_- \|\boldsymbol\xi\|^2 
\leq \boldsymbol\xi^\intercal {\bf A}({\bf x}) \boldsymbol\xi 
\leq a_+ \|\boldsymbol\xi\|^2 
\big\}\,.
\end{aligned}
\end{equation}

\emph{(i.) Analyticity of the parameter-to-data map.} 
For affine-parametric ${\bf V}_{\bf y}({\bf x})$ as in \eqref{eq:KL},
the Jacobian matrices ${\bf J}_{\bf y}({\bf x})$ and their transposes 
${\bf J}_{\bf y}({\bf x})^\intercal$ are likewise affine-parametric, hence 
depend analytically on the components $y_j$ of ${\bf y} \in \square$. 
According to \eqref{eq:Jbound}, 
they are also uniformly with respect to ${\bf y}$ nonsingular. 
This implies that both, 
\[
\big({\bf J}_{\bf y}(\cdot)^\intercal{\bf J}_{\bf y}(\cdot)\big)^{-1}: 
\square \to L^\infty(D_\refd; \mathbb{R}^{d\times d})
\ \mbox{and}\ 
\det{\bf J}_{\bf y}(\cdot): \square \to L^\infty(D_\refd)\,,
\]
depend analytically on ${\bf y}$ (being linear combinations 
of products of elements of ${\bf J}_{\bf y}({\bf x})$), 
uniformly with respect to ${\bf y}\in \square$. 
According to \eqref{eq:ParCoef}, 
there exist constants $0<a_- \leq a_+ <\infty$ (depending on 
$\underline{\sigma}, \overline{\sigma}$ and on the constant 
$C>0$ in \eqref{eq:uniformity}) such that
\[
{\bf A}: \square \to \calD(a_-,a_+) \subset L^\infty(D_\refd; \mathbb{R}^{d\times d}): 
{\bf y}\mapsto \big({\bf J}_{\bf y}(\cdot)^\intercal{\bf J}_{\bf y}(\cdot)\big)^{-1}
\det{\bf J}_{\bf y}(\cdot)
\]
is analytic. 
Arguing similarly, we see also that ${\bf y}\mapsto 
f^{\refd}_{\bf y}(\cdot)$, as defined in \eqref{eq:ParRHS}, 
is analytic as a map $\square\to V'$. 
Hence, the parameter-to-data map 
\begin{equation}\label{eq:para2data}
%====================================
  \square\to\calD(a_-,a_+)\times V',\quad
  {\bf y}\mapsto ({\bf A}_{\bf y}^{\refd},f_{\bf y}^\refd)
\end{equation}
is analytic.

\emph{(ii.) Analyticity of the data-to-solution map.} 
For ${\bf A}\in\calD(a_-,a_+)$, the differential operator 
$\mathrm{L}({\bf A} , \partial_{{\bf x}}) := -\div ({\bf A} \nabla) \in \calL(V,V')$ 
is an isomorphism by the Lax-Milgram lemma. 
It therefore admits a bounded inverse $\mathrm{L}^{-1}\in \calL_{\iso}(V',V)$. 
The inversion map 
\[
\mathrm{inv}: \calL_{\iso}(V,V') \to \calL_{\iso}(V',V): \mathrm{L} \mapsto \mathrm{L}^{-1} 
\]
is analytic, which is easily verified by a Neumann series argument.
Since
\[
{\mathbb R}^{d\times d}_{\sym} \to \calL(V,V'): 
{\bf A} \mapsto \mathrm{L}({\bf A} , \partial_{{\bf x}}) 
\]
is linear, it is in particular analytic. Accordingly, the compositional map
\[
\calD(a_-,a_+) \to  \calL_{\iso}(V',V): 
{\bf A} \mapsto \mathrm{L}({\bf A},\partial_{{\bf x}})^{-1}
\]
is analytic. Since moreover the map ${\mathrm{apply}}$ 
given by 
\[
{\mathrm{apply}}: \calL_{\iso}(V',V) \times V' \to V : 
(\mathrm{L}^{-1} , f) \mapsto  \mathrm{L}^{-1}f
\]
is bilinear, it is also hence analytic, which implies that 
the data-to-solution map
\[
\calD(a_-,a_+)\times V' \to V: ({\bf A},f) \mapsto 
\mathrm{apply}\big(\mathrm{L}({\bf A},\partial_{\bf x})^{-1}, f\big) 
\]
is analytic, as a composition of analytic maps.

\emph{(iii.) Analyticity of the parameter-to-solution map.}
In view of the analyticity of the parameter-to-data map from 
item \emph{(i.)} and the analyticity of the data-to-solution 
map from item \emph{(ii.)}, we conclude that the composed 
map, that is the parameter-to-solution map given by
\[
\square \to V: {\bf y} \mapsto \mathrm{apply}
\big(\mathrm{L}({\bf A}_{\bf y},\partial_{\bf x})^{-1},f^{\refd}_{\bf y}\big)
\]
is analytic. 
Especially, the decay of $\partial_{y_k}$ with respect of
the coordinate dimension $k$ is inherited, which gives the desired 
bound \eqref{eq:regu} of the solution's derivates, see 
e.g.~\cite[Sct.~2.4]{HSS} for the details.
\end{proof}
\begin{remark}\label{rmk:NonAff}
The analyticity of affine-parametric dependence \eqref{eq:KL} of the data-to-domain 
map can be replaced by analyticity of any non-affine dependence, e.g.~the so-called 
``log-affine'' dependence where an exponential is inserted into the parametrization.
\end{remark}

\begin{remark}\label{rmk:Gs}
The proof of Theorem~\ref{thm:regu} exploits,
in an essential fashion, that compositions
of analytic maps are again analytic.
This reasoning can also be performed in a more general setting, if for example
some of the data depend only Gevrey-regular on parameters.
Using the Gevrey composition lemma from \cite[Sct.~2.4]{HSS}, 
corresponding results on Gevrey-regular dependence similar to the recent results in \cite{Chernov,ana}
follow by the exact same, qualitative arguments.
Detailed bootstrap proofs via induction and the chain rule as in \cite{Chernov,ana,HPS}
can provide, however, sharper bounds on the constant $\rho$ in \eqref{eq:regu}.
\end{remark}

%=======================================
\subsubsection{Heterogeneous media. Eulerian and Lagrangian perspective}
\label{sec:interfaces}
%=======================================
The discussion in the previous section refers 
to the Poisson equation \eqref{eq:Poisson}, 
i.e.\ to diffusion in homogeneous media. 
In heterogeneous media, where
stationary diffusion is governed by
the more general diffusion equation
\begin{equation}\label{eq:diffusion_euler}
%======================================
  -\div\big({\bf A}\nabla u_{\bf y} \big) = f\ \text{in}\ D_{\bf y}\,, 
  \quad u_{\bf y} = 0\ \text{on}\ \partial D_{\bf y}\,,
\end{equation}
one obtains, in contrast to \eqref{eq:ParCoef}, 
\[
{\bf A}^{\refd}_{\bf y}({\bf x}) 
= 
{\bf J}_{\bf y}({\bf x})^{-\intercal} ({\bf A}\circ {\bf V}_{\bf y})({\bf x})
{\bf J}_{\bf y}({\bf x})^{\intercal}\det{\bf J}_{\bf y}({\bf x})
\]
in the variational formulation \eqref{eq:varform}. 
We see 
that the domain-to-data map is analytical in this case 
\emph{whenever the diffusion matrix ${\bf A}$ is an analytic function of
${\bf x}$ in the hold-all $\mathcal{D}$}. 
Theorem~\ref{thm:regu} holds also true in this situation
provided that ${\bf A}$ is in addition uniformly elliptic on $\mathcal{D}$.
Similarly, if ${\bf x}\mapsto {\bf A}({\bf x})$ is $s$-Gevrey regular
in ${\mathcal D}$, the domain-to-data map is $s$-Gevrey regular,
by the Gevrey-composition Lemma in \cite[Sct.~2.4]{HSS}.

The situation becomes different if we consider Lagrangian
coordinates for the diffusion matrix under consideration.
This means that \eqref{eq:diffusion_euler} is given as
\begin{equation}\label{eq:diffusion_lagrange}
%======================================
  -\div\big(({\bf A}\circ{\bf V}_{\bf y}^{-1}) \nabla u_{\bf y} \big) = f\ \text{in}\ D_{\bf y}\,, 
  \quad u_{\bf y} = 0\ \text{on}\ \partial D_{\bf y}\,,
\end{equation}
which leads to 
\[
{\bf A}^{\refd}_{\bf y}({\bf x}) 
= 
{\bf J}_{\bf y}({\bf x})^{-\intercal} {\bf A}({\bf x}) {\bf J}_{\bf y}({\bf x})^{\intercal}
\det{\bf J}_{\bf y}({\bf x})\,,
\quad
{\bf x} \in \domain 
\,.
\]
In this case, the domain-to-data map is analytical also 
for ${\bf A}$ being just in $L^\infty(D_{\refd},\mathbb{R}^{d\times d}_{\sym})$.
Therefore, the decay estimate \eqref{eq:regu} is valid. 
Since
the diffusion matrix ${\bf A}$ in \ref{eq:diffusion_lagrange} 
can also be piecewise smooth, e.g.
\[
  {\bf A}({\bf x}) = \sum_{i=1}^M \alpha_i \mathds{1}_{S_{\refd,i}({\bf x})}
\]
with $\alpha_i\in\mathbb{R}$ and $\mathds{1}_{S_{\refd,i}}$ 
being the indicator functions of subdomains $S_{\refd,i}
\subset D_\refd$, interface problems are covered in this way, 
see \cite{HPS} for details.

\begin{remark}\label{rmk:SemiLin}
The PDEs \eqref{eq:Poisson} and
\eqref{eq:diffusion_euler} considered so far were linear and elliptic. 
An extension of the preceeding arguments to semilinear, 
elliptic PDEs with Gevrey nonlinearity is available in \cite{HSS}.
Eigenvalue problems with parametric Gevrey-dependence 
have been considered in \cite{ChernEVPGevr}, albeit without consideration
of domain-variations. 
Using the present formalism, Gevrey-regular dependence
of the first eigenvalue on the shape of the domain 
could be shown also for the setting in \cite{ChernEVPGevr}.
\end{remark}

%=======================================
\subsubsection{Linear elasticity}
\label{sec:LinElas}
%=======================================
The equations of so-called ``linear elastostatics''
constitute a mathematical model for small deformations
and internal stress of solid objects in equilibrium,
subject to certain load conditions.
Being linear, elliptic divergence-form PDEs, these equations 
are structurally similar to \eqref{eq:diffusion_euler}, and 
corresponding results on analytic dependence of displacement
fields on the physical configuration hold.

For the mathematical formulation, we partition the 
boundary $\Gamma_\refd = \partial D_\refd$ into two 
disjoint, open parts $\Gamma^D_{\refd}$ and $\Gamma^N_{\refd}$
such that 
\[
|\Gamma^D_{\refd}| > 0\,, \quad 
\overline{\Gamma}_{D,\refd}\cup\overline{\Gamma}_{N,\refd}=\Gamma_\refd\,,
\]
on which we shall pose Dirichlet boundary conditions 
or Neumann boundary conditions, respectively. 
Moreover,
we set
\[
  [H_D^1(D_\refd)]^d \isdef \big\{{\bf u}\in [H^1(D_\refd)]^d: 
  	{\bf u} = {\bf 0}\ \text{on}\ \Gamma^D_{\refd}\big\}
\]
and denote its transported counterpart by
 \[
  \big[H_D^1\big(D_{\bf y} \big)\big]^d \isdef \big\{{\bf u}_{\bf y}\in \big[H^1\big(D_{\bf y} \big)\big]^d: 
  	{\bf u}_{\bf y} = {\bf 0}\ \text{on}\ \Gamma^D_{\bf y}\big\}\,,
\]
where 
$\Gamma^D_{\bf y} \isdef {\bf V}_{\bf y}(\Gamma^D_{\refd})$
and likewise 
$\Gamma^N_{\bf y} \isdef {\bf V}_{\bf y}(\Gamma^N_{\refd})$. 

We search for the displacement field
${\bf u}_{\bf y}\in \big[H_D^1\big(D_{\bf y} \big)\big]^d$ 
of an elastic body made of homogeneous, isotropic material, and
occupying $D_{\bf y}$.  
To that end, we use the following 
notation for the respective linearized strain tensor
\[
	{\bf u}_{\bf y} \mapsto \varepsilon({\bf u}_{\bf y})\isdef\frac{1}{2}\big(\nabla {\bf u}_{\bf y}+(\nabla{\bf u}_{\bf y})^\intercal\big)\,.
\]
It is linked to the stress tensor $\sigma({\bf u})$ 
by Hooke’s law:
\[
  \sigma({\bf u}_{\bf y}) = \frac{E\nu}{(1+\nu)(1-2\nu)}\trace\big(\varepsilon({\bf u}_{\bf y})\big) {\bf I}
	+ \frac{E}{1+\nu} \varepsilon({\bf u}_{\bf y})\,.
\]
Here, $E > 0$ is the Young modulus and $0<\nu<1/2$
is the Poisson ratio. In the equilibrium, 
we then have
\begin{equation}\label{eq:elasticity}
%====================================
\left\{\;\begin{aligned}
-\div\big(\sigma({\bf u}_{\bf y}) \big) &= {\bf f}&&\text{in}\ D_{\bf y}\,,
\\[1ex]
{\bf u}_{\bf y} &= \mathbf{0}&&\text{on}\ \Gamma^D_{\bf y}\,,
\\[1ex]
\sigma\big({\bf u}_{\bf y} \big){\bf n} &= {\bf g}&&\text{on}\ \Gamma^N_{\bf y}\,,
\end{aligned}\right.
\end{equation}
where ${\bf n}$ denotes the unit exterior normal at the 
parametric domain's boundary $\Gamma_{\bf y} = \partial D_{\bf y}$. 
This formulation of 
so-called ``linear elasticity'' 
is known as the displacement or primal formulation.
By transforming the variational formulation in $D_{\bf y}$ onto
the reference domain $D_{\refd}$, one obtains the following
result on the regularity of the solution to \eqref{eq:elasticity},
see \cite{HKS} for details. 
We recall the hold-all domain $\mathcal{D}$ from \eqref{eq:holdall}.

\begin{theorem}[{\cite[Thm.~3.4]{HKS}}]\label{thm:elasticity}
%==============================================
Let ${\bf u}_{\bf y} \in \big[H_D^1\big(D_{\bf y} \big)\big]^d$ 
be the solution to \eqref{eq:elasticity} 
and let 
$\widehat{\bf u}_{\bf y} \isdef {\bf u}_{\bf y}\circ{\bf V}^{-1}_{\bf y}\in [H_D^1(D_\refd)]^d$ 
denote the respective pull-back onto $D_{\refd}$. 
If the body force
${\bf f}\in [C^\infty(\mathcal{D})]^d$ 
and the surface force 
${\bf g}\in [C^\infty(\mathcal{D})]^d$
are analytic in $\mathcal{D}$ and the deformation 
${\bf V}_{\bf y}$ satisfies Assumption~\ref{ass:V}, 
then there holds
\[
\forall {\balpha}\in\calF: 
\quad 
  \|\partial^{\balpha}_{{\bf y}}\hat{\bf u}_{\bf y}\|_{[H^1(D_\refd)]^d} 
  	\lesssim |\balpha|! \rho^{|\balpha|}\bgamma^{\balpha}
\,.
\]
\end{theorem}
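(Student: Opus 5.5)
We follow the three-step template used in the proof of Theorem~\ref{thm:regu}: write the parameter-to-solution map ${\bf y}\mapsto\hat{\bf u}_{\bf y}$ as a composition of a parameter-to-data map, a data-to-solution map and the bilinear $\mathrm{apply}$ map, show each is analytic, and deduce the weighted bound via \cite[Sct.~2.4]{HSS}.

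\emph{Pull-back.} We transport the displacement variational formulation of \eqref{eq:elasticity} to $D_\refd$. Using $\nabla{\bf u}_{\bf y}\circ{\bf V}_{\bf y}=\hat\nabla\hat{\bf u}_{\bf y}{\bf J}_{\bf y}^{-1}$, the bilinear form becomes
\[
b_{\bf y}(\hat{\bf u},\hat{\bf v})=\int_{D_\refd}\sigma_{\bf y}(\hat{\bf u}):\varepsilon_{\bf y}(\hat{\bf v})\det{\bf J}_{\bf y}\d{\bf x},
\]
where
\[
\varepsilon_{\bf y}(\hat{\bf u})=\tfrac12\bigl(\hat\nabla\hat{\bf u}{\bf J}_{\bf y}^{-1}+{\bf J}_{\bf y}^{-\intercal}(\hat\nabla\hat{\bf u})^\intercal\bigr).
\]
Equivalently, $b_{\bf y}(\hat{\bf u},\hat{\bf v})=\int_{D_\refd}\mathbb{C}_{\bf y}\hat\nabla\hat{\bf u}:\hat\nabla\hat{\bf v}\d{\bf x}$ with a fourth-order tensor $\mathbb{C}_{\bf y}$ built from ${\bf J}_{\bf y}^{-1}$ and $\det{\bf J}_{\bf y}$. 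The load functional has two parts. The volume part is $({\bf f}\circ{\bf V}_{\bf y})\det{\bf J}_{\bf y}$. The surface part is $({\bf g}\circ{\bf V}_{\bf y})$ times the surface Jacobian $\det{\bf J}_{\bf y}\,\|{\bf J}_{\bf y}^{-\intercal}\hat{\bf n}\|$ on $\Gamma^N_\refd$. The Dirichlet part is mapped to $\Gamma^D_\refd$, so the trial space $[H^1_D(D_\refd)]^d$ is independent of ${\bf y}$.

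\emph{Step (i): parameter-to-data analyticity.} Assumption~\ref{ass:V} makes ${\bf J}_{\bf y}$ affine in ${\bf y}$ with coefficients $w_k{\boldsymbol\varphi}_k'$ of size $\gamma_k$ in $L^\infty$. By \eqref{eq:Jbound}, ${\bf J}_{\bf y}$ stays uniformly invertible on a complex polydisc neighbourhood of $\square$ whose radii are tied to $\bgamma$. Hence ${\bf J}_{\bf y}^{-1}$, $\det{\bf J}_{\bf y}$, the surface Jacobian (the square root of a quantity bounded away from zero) and thus $\mathbb{C}_{\bf y}$ are holomorphic into $L^\infty$. Analyticity of ${\bf f},{\bf g}$ on the hold-all $\mathcal{D}$ makes ${\bf y}\mapsto{\bf f}\circ{\bf V}_{\bf y}$ holomorphic into $L^\infty(D_\refd)$, and ${\bf y}\mapsto{\bf g}\circ{\bf V}_{\bf y}$ holomorphic into $L^\infty(\Gamma^N_\refd)$, for complex perturbations small enough to stay in the domain of analytic continuation. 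The trace theorem then yields a holomorphic right-hand side in the dual of $[H^1_D(D_\refd)]^d$.

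\emph{Step (ii): data-to-solution analyticity, the main obstacle.} We need uniform coercivity of $b_{\bf y}$ on $[H^1_D(D_\refd)]^d$, uniformly in ${\bf y}\in\square$ and stable under small complex perturbations. Unlike the scalar case, pointwise ellipticity of $\mathbb{C}_{\bf y}$ fails; only the symmetric part $\varepsilon_{\bf y}$ is controlled. We would argue on the physical domain: $b_{\bf y}(\hat{\bf u},\hat{\bf u})\gtrsim\|\varepsilon({\bf u}_{\bf y})\|^2_{L^2(D_{\bf y})}$, since $E>0$ and $0<\nu<1/2$. We then need Korn's inequality on $H^1_D(D_{\bf y})$ with a constant uniform in ${\bf y}$, and transport it back using \eqref{eq:uniformity}. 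For uniformity of the Korn constant we would use compactness of $(\square,d_\square)$, continuity of ${\bf y}\mapsto{\bf V}_{\bf y}$ in $C^1$, and the contradiction/compactness proof of Korn's inequality, which gives lower semicontinuity-type control over compact families of uniformly bi-Lipschitz images. Once uniform coercivity holds on $\square$, it persists under complex perturbations of size $\lesssim\gamma_k$ by continuity of $b_{\bf y}$ in $\mathbb{C}_{\bf y}$. The Neumann-series argument for the inversion map, followed by the bilinear $\mathrm{apply}$ map, then gives holomorphy of ${\bf y}\mapsto\hat{\bf u}_{\bf y}$ with a uniform bound.

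\emph{Step (iii): derivative bounds.} We obtain the stated estimate, with $\rho$ depending on $C_\square$, $E$, $\nu$ and the analyticity radii of ${\bf f},{\bf g}$. Either we apply Cauchy's integral formula on the polydiscs of radii $\propto1/(\rho\gamma_k)$ as in \cite[Sct.~2.4]{HSS}, or we use an inductive chain-rule argument on $\partial^\balpha_{\bf y}$ of the pulled-back equation, yielding $|\balpha|!\rho^{|\balpha|}\bgamma^\balpha$.
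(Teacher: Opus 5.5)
Your overall route (pull back to $D_\refd$, analyticity of the parameter-to-data map, analyticity of the data-to-solution map via uniform coercivity, then weighted derivative bounds) follows the template the paper uses for Theorem~\ref{thm:regu}. For elasticity the paper gives no proof and only cites \cite{HKS}, and you correctly identify the uniform Korn inequality as the one ingredient absent from the scalar case. However, your justification of that uniformity rests on a false premise, whatever the remark following the definition of $d_\square$ says. With the sup-metric, $(\square,d_\square)$ is the closed unit ball of $\ell^\infty$ and is not compact, since distinct unit sequences are at distance $1$. Tikhonov's theorem gives compactness only in the product topology, and the argument is repaired by working there. If ${\bf y}_n\to{\bf y}^*$ coordinatewise, then $\|{\bf J}_{{\bf y}_n}-{\bf J}_{{\bf y}^*}\|_{C^0}\le\sum_k\gamma_k|y_{n,k}-y^*_k|\to0$ by dominated convergence, because $\bgamma\in\ell^1$. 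Hence ${\bf J}_{{\bf y}_n}^{-1}\to{\bf J}_{{\bf y}^*}^{-1}$ uniformly by \eqref{eq:Jbound}. Now take a normalized sequence with $\|\hat\nabla\hat{\bf u}_n\|_{L^2}=1$ and $\|\varepsilon_{{\bf y}_n}(\hat{\bf u}_n)\|_{L^2}\to0$. It also satisfies $\|\varepsilon_{{\bf y}^*}(\hat{\bf u}_n)\|_{L^2}\to0$, which contradicts Korn's inequality on the single domain $D_{{\bf y}^*}$ transported to $D_\refd$. With this change, Step~(ii) holds.

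The second problem is the size of the complex neighbourhoods in Steps~(i)--(iii). As written, perturbations with $|z_k-y_k|\lesssim\gamma_k$ are too small: Cauchy's estimate would then give a factor $\bgamma^{-\balpha}$. Polydiscs with $\balpha$-independent radii $r_k\propto 1/(\rho\gamma_k)$ fail the other way. They perturb the Jacobian by $\sum_{k\in\operatorname{supp}\balpha} r_k\gamma_k\sim|\operatorname{supp}\balpha|/\rho$, which is unbounded in $\balpha$. So invertibility of ${\bf J}_{\bf z}$, coercivity, and the analytic continuation of ${\bf f},{\bf g}$ are all lost. What you should prove is holomorphy, with a uniform bound $M$, on the set of ${\bf z}$ satisfying $\sum_k\gamma_k|z_k-y_k|\le\tau$ for some ${\bf y}\in\square$. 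Here $\tau>0$ must be small relative to $\underline\sigma$, to the uniform coercivity constant, and to the width of the complex neighbourhood of $\mathcal{D}$ to which ${\bf f},{\bf g}$ extend. Then, for each $\balpha$, take the $\balpha$-dependent radii $r_k=\tau\alpha_k/(|\balpha|\gamma_k)$ for $k\in\operatorname{supp}\balpha$. This polydisc lies in that set, and Cauchy's estimate gives
\[
\|\partial^\balpha_{\bf y}\hat{\bf u}_{\bf y}\|_{[H^1(D_\refd)]^d}\le M\,\balpha!\,\frac{|\balpha|^{|\balpha|}}{\prod_k\alpha_k^{\alpha_k}}\,\tau^{-|\balpha|}\bgamma^\balpha\le M\,|\balpha|!\,(e/\tau)^{|\balpha|}\bgamma^\balpha .
\]
Distributing the fixed budget $\tau$ over the active coordinates is exactly what produces $|\balpha|!$ rather than $\balpha!$. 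Your alternative inductive chain-rule route sidesteps this issue.
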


This theorem remains valid also for materials with more 
general properties, where the stress tensor is given by 
means of a fourth order tensor in accordance with 
$\sigma({\bf u}) \isdef \mathbb{C} : \varepsilon({\bf u})$.
Especially, in accordance with Subsection~\ref{sec:interfaces},
composite materials can be modeled by means of the
material tensor
\[
 \mathbb C({\bf x},{\bf y}) = \mathbb{C}_0 
 	+ \sum_{i=1}^N\mathbb C_i \mathds{1}_{S_{{\bf y},i}}({\bf x})\,,
\]
where $\mathds{1}_{S_{{\bf y},i}}$ denote the indicator 
functions for $S_{{\bf y},i}\isdef S_{\refd,i}\circ{\bf V}_{\bf y}$
of subdomains $S_{\refd,i}\subset D_{\refd}$ and $\mathbb{C}_i$ 
denote constant, symmetric fourth order tensors. We 
refer to \cite{HKS} for the details.

These considerations pertained to so-called ``linear elasticity'',
modelling elastic bodies underoing small deformations.
Corresponding results are expected to hold also for large deformations,
under suitable constitutive laws, where analytic dependence of 
deformations on problem parameters is known \cite{Valent} to hold.
%=======================================
\subsubsection{Stationary Navier-Stokes equations}
%=======================================
We consider next the stationary Navier-Stokes equations, 
in an ensemble $\{ D_{\bf y} : {\bf y} \in \square \}\subset \R^d$ 
of bounded domains, in space dimension $d=2,3$.
Here, we
are looking for a tuple 
$({\bf u}_{\bf y},p_{\bf y})\in \calX_{{\bf y}}
:= 
\big[H_0^1\big(D_{\bf y} \big)\big]^d \times L_{\#}^2\big(D_{\bf y} \big)$ 
such that
\begin{equation}\label{eq:ns}
%====================================
{\mathrm L}\left( \begin{array}{c} {\bf u}_{\bf y} \\ p_{\bf y} \end{array} \right) 
:= 
\left\{\;\begin{aligned}
-\Delta {\bf u}_{\bf y} + \big({\bf u}_{\bf y} \cdot\nabla\big)
	{\bf u}_{\bf y} + \nabla p_{\bf y} &= {\bf f}&&\text{in}\ D_{\bf y}\,,\\[1ex]
	\div {\bf u}_{\bf y} &= 0&&\text{in}\ D_{\bf y}\,,\\[1ex]
{\bf u}_{\bf y} &= \mathbf{0}&&\text{on}\ \Gamma_{\bf y}\,.
\end{aligned}\right.
\end{equation}
Here,
the space $L_{\#}^2\big(D_{\bf y} \big)$ denotes the space of
all square-integrable functions with vanishing mean, 
that is
\[
L_{\#}^2\big(D_{\bf y} \big) \isdef \Bigg\{ p\in L^2\big(D_{\bf y} \big): 
  	\int_{D_{\bf y}} p \d{\bf x} = 0\Bigg\}\,.
\]
Classical arguments based on the Faedo-Galerkin method, 
Brouwer’s fixed point theorem and compactness, 
ensure the existence of variational solutions in the sense of E. Hopf and J. Leray 
(see, e.g., \cite[Chpt.~II, Sct.~1, Thm.~1.2]{Temam}).
Under additional so-called ``small-data'' assumptions, these solutions are unique.

Shape holomorphy for (Leray-Hopf solutions of)
the Navier-Stokes equations \eqref{eq:ns} 
has been verified firstly in \cite{CSZ}.
There, the Leray-Hopf solutions have been extended to complex data, 
and have been shown to be continuously Fr\'{e}chet-differentiable 
with respect to the domain, which implies holomorphic dependence of
solutions on the shape by a classical result in complex analysis in 
Banach spaces.
Real analyticity has subsequently been proven in \cite{Chernov}, 
via bootstrapping of the parametric regularity and using the chain rule.
Specifically, there holds the following
result on growth-bounds of the derivatives of the solution ${\bf u}_{\bf y}$
and $p_{\bf y}$ of the Navier-Stokes equations \eqref{eq:ns}
with respect to the parameter ${\bf y}$.
Note that this result, of course, applies in particular 
to the stationary Stokes equations.
\begin{theorem}[{\cite[Thm.~2]{Chernov}, \cite{CSZ}}]\label{thm:ns}
%==============================================
Let ${\bf u}_{\bf y} \in \big[H_0^1\big(D_{\bf y}\big)\big]^d$ 
and $p_{\bf y} \in L_{\#}^2\big(D_{\bf y} \big)$ 
be the solution 
to the Navier-Stokes equations \eqref{eq:ns} and 
$\hat{\bf u}_{\bf y} \isdef {\bf u}_{\bf y}\circ{\bf V}_{\bf y}^{-1} \in [H_0^1(D_\refd)]^d$ 
and 
$\hat{p}_{\bf y} \isdef p_{\bf y}\circ{\bf V}_{\bf y}^{-1} \in L_{\#}^2(D_\refd)$ 
the respective pull-backs onto the reference domain $D_{\refd}$. 
If the loading ${\bf f}\in [C^\infty(\mathcal{D})]^d$ is analytical 
and the deformation field ${\bf V}_{\bf y}$ 
satisfies Assumption~\ref{ass:V}, then there exists a
$\rho > 0$ such that we have
\[
\forall {\balpha}\in\calF: \quad 
  \|\partial^{\balpha}_{{\bf y}}\hat{\bf u}\|_{[H^1(D_\refd)]^d} 
  	\lesssim |\balpha|! \rho^{|\balpha|}\bgamma^{\balpha}\,,\quad
  \|\partial^{\balpha}_{{\bf y}}\hat{p}\|_{L^2(D_\refd)} 
  	\lesssim |\balpha|! \rho^{|\balpha|}\bgamma^{\balpha}
\,.
\]
\end{theorem}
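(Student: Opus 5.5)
We follow the three-step roadmap used for Theorem~\ref{thm:regu}: pull back to $D_\refd$, show analyticity of the parameter-to-data map, show analyticity of the data-to-solution map near the given solution, and compose. The Navier--Stokes specific difficulties are the divergence constraint and the nonlinearity.

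\emph{Pull-back.} We use the Piola transform $\hat{\bf u}_{\bf y}\mapsto \det({\bf J}_{\bf y})^{-1}{\bf J}_{\bf y}\hat{\bf u}_{\bf y}$, or equivalently the plain pull-back with the transformed divergence $\trace({\bf J}_{\bf y}^{-1}\hat\nabla\hat{\bf u})\det{\bf J}_{\bf y}$. The weak form of \eqref{eq:ns} then becomes a problem on the fixed space $[H^1_0(D_\refd)]^d\times L^2_\#(D_\refd)$. The mean-zero condition for $p$ carries a weight $\det{\bf J}_{\bf y}$, which we handle by a fixed normalization. The pulled-back problem has three parts: the diffusion coefficient ${\bf A}^\refd_{\bf y}$ from \eqref{eq:ParCoef}; a pressure--velocity coupling $b_{\bf y}(\hat{\bf v},\hat q)=\int_{D_\refd}\hat q\,\trace({\bf J}_{\bf y}^{-1}\hat\nabla\hat{\bf v})\det{\bf J}_{\bf y}$; and a trilinear convection form $c_{\bf y}(\hat{\bf w},\hat{\bf u},\hat{\bf v})=\int_{D_\refd}(\hat\nabla\hat{\bf u}\,{\bf J}_{\bf y}^{-1}\hat{\bf w})\cdot\hat{\bf v}\det{\bf J}_{\bf y}$. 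The right-hand side is $({\bf f}\circ{\bf V}_{\bf y})\det{\bf J}_{\bf y}$.

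\emph{Analyticity of the parameter-to-data map.} As in step (i) of Theorem~\ref{thm:regu}, the entries of ${\bf J}_{\bf y}$ are affine in ${\bf y}$ and uniformly invertible by \eqref{eq:Jbound}. Hence ${\bf J}_{\bf y}^{-1}$, $\det{\bf J}_{\bf y}$ and ${\bf A}^\refd_{\bf y}$ are holomorphic $L^\infty$-valued maps on a polydisc-type neighbourhood of $\square$ with radii $\sim 1/\gamma_k$. By analyticity of ${\bf f}$ on $\mathcal D$, the right-hand side is holomorphic with values in $H^{-1}$. Consequently the coefficient-to-operator maps into $\calL([H^1_0]^d,[H^{-1}]^d)$, $\calL(L^2_\#,[H^{-1}]^d)$ and the trilinear forms (bounded via Sobolev embedding $H^1\subset L^4$, $d\le 3$) depend holomorphically on ${\bf y}$.

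\emph{Analyticity of the data-to-solution map.} Write the problem as $\mathcal{R}({\bf y},(\hat{\bf u},\hat p))=0$ with $\mathcal R$ holomorphic in both arguments, since it is polynomial (quadratic) in the state. The key step is to show that $D_{(\hat{\bf u},\hat p)}\mathcal R$ is an isomorphism, uniformly in ${\bf y}$ on the complex neighbourhood. This is the Oseen-type linearization $\hat{\bf v}\mapsto -\div({\bf A}\hat\nabla\hat{\bf v})+c(\hat{\bf u},\hat{\bf v},\cdot)+c(\hat{\bf v},\hat{\bf u},\cdot)$ coupled with $b_{\bf y}$. Coercivity on the kernel holds under the small-data assumption: $\|\hat{\bf u}\|_{H^1}\lesssim\|{\bf f}\|/\nu$ small relative to the ellipticity constant $a_-$. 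The inf-sup condition for $b_{\bf y}$ is inherited from $D_\refd$ through the uniform bounds \eqref{eq:uniformity}. For complex ${\bf y}$ near $\square$ both conditions persist by perturbation, provided the imaginary parts are small relative to $\gamma_k$. The holomorphic implicit function theorem in Banach spaces then yields holomorphy of ${\bf y}\mapsto(\hat{\bf u}_{\bf y},\hat p_{\bf y})$ together with a uniform bound on the complex neighbourhood. We also need the real solution branch to be the unique small one, so that the continuation is consistent.

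\emph{Derivative bounds.} Finally, Cauchy's integral formula on polydiscs with radii $\rho_k\sim 1/(\rho\gamma_k)$, organised as in \cite[Sct.~2.4]{HSS}, gives the stated bounds $|\balpha|!\rho^{|\balpha|}\bgamma^\balpha$ for both $\hat{\bf u}$ and $\hat p$. Alternatively, one can bootstrap by induction: differentiate the pulled-back equations, apply the Leibniz rule to the quadratic term, and use the uniform inverse bound. The combinatorial sums close with the factorial weight $|\balpha|!$.

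I expect the main obstacle to be the uniform invertibility of the linearized saddle-point operator for \emph{complex} parameters. This requires quantifying the small-data condition so that it survives complexification of the geometry, since the convective term destroys coercivity unless $\|\hat{\bf u}\|$ is small. I would first try to keep the parameters in a small complex neighbourhood of $\square$, where the required bounds follow by perturbation from the real case.
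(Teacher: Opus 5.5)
Your outline (plain pull-back, holomorphy of the pulled-back coefficients and load, a holomorphic implicit function theorem for the Oseen-linearised saddle-point problem under a uniform small-data hypothesis, then Cauchy estimates or bootstrapping) is the route of the works the paper relies on. The paper gives no proof of its own: it cites \cite{CSZ} for complexification and complex Fr\'echet differentiability in the domain, and \cite{Chernov} for bootstrapping via the chain rule. The step that fails as you state it is the choice of complex parameter domain, and this is exactly the step that produces the form $|\balpha|!\rho^{|\balpha|}\bgamma^\balpha$. There is no ``polydisc-type neighbourhood of $\square$ with radii $\sim1/\gamma_k$''. For $z_k=y_k+\mathrm{i}c/\gamma_k$, every increment $w_k\boldsymbol\varphi_k(z_k-y_k)$ has $C^1$-norm $c$, so the series \eqref{eq:KL} for ${\bf V}_{\bf z}$ does not even converge, let alone give a uniformly invertible ${\bf J}_{\bf z}$. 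For the same reason, ``imaginary parts small relative to $\gamma_k$'' has the scaling reversed. Likewise, Cauchy's formula on polydiscs of radius $1/(\rho\gamma_k)$ in each active coordinate would give $\balpha!\rho^{|\balpha|}\bgamma^\balpha$, which is false in general. Already $(1+\sum_k\gamma_ky_k)^{-1}$ with $c_\bgamma<1$ has $\partial^\balpha_{\bf y}$ at ${\bf y}=\bsnul$ equal to $\pm|\balpha|!\bgamma^\balpha$. What breaks is the uniform bound, since such a polydisc moves ${\bf V}$ by up to $|\operatorname{supp}\balpha|/\rho$ in $C^1$.

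The missing idea is a budget on the radii. For ${\bf y}\in\square$ and radii $r_k\ge0$ with $\sum_k\gamma_kr_k\le\eps$, every ${\bf z}$ with $|z_k-y_k|\le r_k$ satisfies $\|{\bf V}_{\bf z}-{\bf V}_{\bf y}\|_{C^1}\le\eps$. So all complexified fields lie in a fixed $\eps$-neighbourhood, in complex $C^1$, of $\{{\bf V}_{\bf y}:{\bf y}\in\square\}$. This set is compact, because $\bgamma\in\ell^1$ makes ${\bf y}\mapsto{\bf V}_{\bf y}$ continuous for the product topology on $[-1,1]^{\N}$. This is where your ``perturbation from the real case'' can be made uniform. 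At real ${\bf y}$, three ingredients give a uniform bound on the inverse of the Oseen saddle-point operator: uniform ellipticity, small-data coercivity, and the inf-sup condition. The inf-sup condition holds at each ${\bf y}_0$ on the Lipschitz domain $D_{{\bf y}_0}$ and is made uniform by perturbation plus compactness of that set. With merely $C^1$ fields it is not obtained by transport from $D_\refd$. A quantitative implicit function theorem then yields a holomorphic solution, bounded by some $M$, in balls of uniform radius. Lipschitz continuity of the real branch makes the local branches agree on overlaps. For given $\balpha$, take $r_k=\eps\alpha_k/(|\balpha|\gamma_k)$ on $\operatorname{supp}\balpha$ and $r_k=0$ elsewhere; then
\[
\|\partial^\balpha_{\bf y}\hat{\bf u}_{\bf y}\|_{[H^1(D_\refd)]^d}\le M\,\balpha!\prod_{k\in\operatorname{supp}\balpha}\Big(\frac{|\balpha|\gamma_k}{\eps\alpha_k}\Big)^{\alpha_k}\le M\,\frac{|\balpha|^{|\balpha|}}{\eps^{|\balpha|}}\,\bgamma^\balpha\le M\,|\balpha|!\,\Big(\frac{e}{\eps}\Big)^{|\balpha|}\bgamma^\balpha ,
\]
and likewise for $\hat p_{\bf y}$, i.e.\ $\rho=e/\eps$. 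Finally, drop the Piola option. Under Assumption~\ref{ass:V} the fields are only $C^1$, so ${\bf J}_{\bf y}$ is merely continuous and the Piola transform does not preserve $[H^1_0(D_\refd)]^d$. It is therefore not equivalent to the plain pull-back that the theorem is about.
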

%==========================================================
\subsubsection{Boundary integral equations}
%==========================================================
If an acoustic wave encounters an impenetrable, bounded 
obstacle $D_{\bf y}\subset\mathbb{R}^3$, having a Lipschitz 
smooth boundary $\Gamma_{\bf y}\isdef\partial D_{\bf y}$, then 
it gets scattered. Given the ``incident plane wave'' $u^{\text{inc}}
({\bf x}) = e^{i\kappa\langle{\bf d},{\bf x}\rangle}$ with known wavenumber
$\kappa$ and direction ${\bf d}$, where $\|{\bf d}\|_2=1$, then the 
superposition $u_{\bf y} = u^{\text{inc}}+u^{\text{s}}_{\bf y}$ of the 
incident plane wave and the ``scattered wave'' satisfies 
the exterior Helmholtz equation
\begin{equation}\label{eq:pde1}
%================================
  \Delta u_{\bf y} + \kappa^2 u_{\bf y} = 0\ \text{in}\ \mathbb{R}^3\setminus\overline{D_{\bf y}}\,.
\end{equation}
The boundary condition at the scatterer's surface
depends on its physical properties. If the scatterer 
constitutes a ``sound-soft obstacle'', then 
the acoustic pressure vanishes at \(\Gamma\)
and we have the homogeneous Dirichlet condition 
\begin{equation}\label{eq:pde2a}
%================================
  u_{\bf y} = 0\ \text{on}\ \Gamma_{\bf y}\,.
\end{equation}
Whereas, if the scatterer constitutes a so-called 
``sound-hard obstacle'', 
then the normal velocity vanishes at \(\Gamma_{\bf y} \)
and we have the homogeneous Neumann condition 
\begin{equation}\label{eq:pde2b}
%================================
\frac{\partial u_{\bf y}}{\partial \bf n} = 0\ \text{on}\ \Gamma_{\bf y}\,.
\end{equation}
The behavior at infinity is imposed 
by the Sommerfeld radiation condition
\begin{equation}	\label{eq:pde3}
%======================================
  \lim_{r\to\infty} r \left\{ \frac{\partial u^{\text{s}}_{\bf y}}{\partial r}({\bf x})
  	- i\kappa u_{\bf y}^{\text{s}}({\bf x})\right\} = 0\,,\ \text{where}\ r\isdef\|{\bf x}\|_2\,.
\end{equation}
In order to solve the boundary value problem
\eqref{eq:pde1}--\eqref{eq:pde3}, we shall employ 
a suitable reformulation by boundary integral equations.
To this end, we introduce the acoustic single and double 
layer operators, the adjoint of the double layer operator, 
and the hypersingular operator given by
\begin{align*}
  \mathcal{V}_{\bf y} \colon H^{-1/2}\big(\Gamma_{\bf y} \big)\to H^{1/2}\big(\Gamma_{\bf y} \big)\,,
  &\quad(\mathcal{V}_{\bf y}\rho)({\bf x}) 
    := \int_{\Gamma_{\bf y}}
  	G({\bf x},{\bf x}')\rho({\bf x}')\d\sigma_{{\bf x}'}\,,
  \\
  \mathcal{K}_{\bf y} \colon L^2\big(\Gamma_{\bf y} \big)\to L^2\big(\Gamma_{\bf y} \big)\,,
  &\quad(\mathcal{K}_{\bf y}\rho)({\bf x}) := 
        \int_{\Gamma_{\bf y} } \frac{\partial G({\bf x},{\bf x}')} {\partial{\bf n}_{{\bf x}'}}\rho({\bf x}')\d\sigma_{{\bf x}'}\,,
  \\
  \mathcal{K}_{\bf y}^\star \colon L^2\big(\Gamma_{\bf y} \big)\to L^2\big(\Gamma_{\bf y} \big)\,,
  &\quad(\mathcal{K}_{\bf y}^\star\rho)({\bf x}):=\int_{\Gamma_{\bf y} }
  	\frac{\partial G({\bf x},{\bf x}')}
	{\partial{\bf n}_{\bf x}}\rho({\bf x}')\d\sigma_{{\bf x}'}\,,\\
  \mathcal{W}_{\bf y}\colon H^{1/2}\big(\Gamma_{\bf y} \big)\to H^{-1/2}\big(\Gamma_{\bf y} \big)\,,
  &\quad(\mathcal{W}_{\bf y}\rho)({\bf x}) := -\frac{1}{\partial{\bf n}_{\bf x}}\int_{\Gamma_{\bf y} }
  	\frac{\partial G({\bf x},{\bf x}')}{\partial{\bf n}_{{\bf x}'}}
		\rho({\bf x}')\d\sigma_{{\bf x}'}\,.
\end{align*}
Here, ${\bf n}_{\bf x}$ and ${\bf n}_{{\bf x}'}$ denote the 
outward pointing normal vectors at the surface points 
${\bf x}, {\bf x}'\in\Gamma_{\bf y} $, respectively, while 
$G(\cdot,\cdot)$ denotes the fundamental solution for the Helmholtz equation
given by
\[
  G({\bf x}, {\bf x}') = \frac{e^{i\kappa\|{\bf x}-{\bf x}'\|_2}}{4\pi\|{\bf x}-{\bf x}'\|_2}\,.
\]

Although the Helmholtz problem \eqref{eq:pde1}--\eqref{eq:pde3}
is uniquely solvable, the respective boundary integral formulation 
might not if $\kappa^2$ is an eigenvalue for the Laplacian inside 
the scatterer $D$. In order to avoid such \emph{spurious modes}, 
one can employ combined field integral equations. 
Then, for some real $\eta\neq 0$, the solution of the boundary integral equation
\begin{equation}\label{eq:D2N}
%===================================
  \bigg(\frac{1}{2}+\mathcal{K}_{\bf y}^\star - i\eta\mathcal{V}_{\bf y} \bigg)
  	\frac{\partial u_{\bf y}}{\partial{\bf n}} 
         = \frac{\partial u^{\text{inc}}}{\partial{\bf n}}
		-i \eta u^{\text{inc}}\quad\text{on}\ \Gamma_{\bf y}
\end{equation}
yields the unknown Neumann data in case of sound-soft scattering 
problems. In case of sound-hard obstacles, the boundary integral equation
\begin{equation}\label{eq:N2D}
%===================================
  \bigg(\frac{1}{2} - \mathcal{K}_{\bf y}  
          + i\eta \mathcal{W}_{\bf y} \bigg)u_{\bf y}
  = u^{\text{inc}}-i\eta\frac{\partial u^{\text{inc}}}{\partial{\bf n}}
  	\quad\text{on}\ \Gamma_{\bf y}
\end{equation}
yields the unknown Dirichlet data. Notice that the boundary 
integral equations \eqref{eq:D2N} and \eqref{eq:N2D} are 
always uniquely solvable, independent of the wavenumber 
$\kappa$, see \cite{BM,CK2,Kuss1969}. 
Note that again the 
total field $\hat{u}_{\bf y} = u\circ{\bf V}_{\bf y}^{-1}$ and 
hence $\hat{u}^{\text{s}}_{\bf y} = u^{\text{s}}\circ{\bf V}_{\bf y}^{-1}$ 
admit the desired grow bounds \eqref{eq:regu} with
respect to the $\|\cdot\|_{H^{1/2}(\Gamma_\refd)}$-norm and
likewise the associated Neumann data with respect to the 
$\|\cdot\|_{H^{-1/2}(\Gamma_\refd)}$-norm. Especially, 
also the boundary integral operators and associated layer 
potentials are analytic with respect to the parameter ${\bf y}$, 
see \cite{JDFH24,HenChS21} for example.
Similar results are available for BIEs corresponding to 
the time-harmonic Maxwell equations, see \cite{JSZ17_2339}.
In accordance with Subsection~\ref{sec:interfaces}, 
we finally conclude that skeleton integral equation formulations 
as proposed in \cite{GHS25} are analytic in the 
skeleton-parametrization subject to parameter ${\bf y}\in\square$.
%==============================================
\subsection{Parabolic initial boundary value problems}
\label{sec:ParIBVP}
%==============================================
The PDEs considered up to this point are stationary and elliptic.
Analyticity of the domain-to-solution map is, however, also afforded
for certain time-dependent PDEs, which have recently been investigated
(see, e.g., \cite{Bruegger1,Bruegger2,para,ana}).
We illustrate this for a basic linear, 
parabolic initial-boundary value problem (IBVP for short). 
\subsubsection{Space-time operator equation}
\label{sec:xtOpEq}
In a finite time interval $[0,T]$ and a 
bounded spatial domain $D \subset \mathbb{R}^d$,
consider the linear, parabolic PDE
\begin{equation}\label{eq:ParIBVP}
Bu = \big(\partial_t + \mathrm{L}({\bf A} , \partial_{{\bf x}})\big) u 
= f \quad\mbox{in}\quad [0,T]\times D\,,
\end{equation} 
where, as in Section~\ref{sec:Poisson} and Secion~\ref{sec:interfaces}, 
the spatial elliptic operator is $\mathrm{L}({\bf A} , \partial_{{\bf x}}) := 
-\div ({\bf A} \nabla)$ with the diffusion coefficient ${\bf A}$ in the set 
of admissible data $\calD(a_-,a_+)$ as defined in \eqref{eq:Da+a-}.

The PDE \eqref{eq:ParIBVP} is complemented 
with initial- and boundary conditions,
\begin{equation}\label{eq:IniBdry}
u({\bf x},0) = u_0({\bf x}) \ \text{in}\ D\,,\quad 
\text{and}\quad u|_{[0,T]\times \partial D} = 0\,,
\end{equation}
respectively.
We impose homogeneous Dirichlet boundary conditions 
on all of $[0,T]\times \partial D$ for convenience -- all that 
follows shall remain valid also for mixed, or Neumann 
boundary conditions on $[0,T]\times \partial D$. 

We consider the usual, 
abstract setting for IBVP \eqref{eq:ParIBVP}\&\eqref{eq:IniBdry},
see e.g.~\cite{DL92,Wloka}, which is based on a Gel'fand triplet 
$V\subset H \simeq H' \subset V'$ 
where
\[
V = H^1_0(D)\,,\quad H = L^2(D) \simeq H'\,,\quad V' = H^{-1}(D) \,.
\]
Problem \eqref{eq:ParIBVP} is a special instance of an 
\emph{abstract parabolic setting}: 
given sesquilinear forms 
$\{ a(t;\cdot,\cdot): V\times V\to \mathbb{R},\ t\in [0,T] \}$
which are continuous and coercive uniformly with respect to $t\in [0,T]$, i.e.\
there are constants  $M_a$, $\alpha > 0$, and $\lambda \geq 0$
such that for $0\leq t \leq T$
\begin{align*}
\forall w,v \in V:&\quad |a(t;w,v)| \leq M_a \| w \|_V \| v \|_V\,,\\
\forall v\in V:&\quad \operatorname{Re} a(t;v,v) + \lambda \| v \|_H^2 \geq \alpha \| v \|_V^2
\,.
\end{align*}
Then, for $t\in [0,T]$, 
there is a unique operator $A(t)\in \calL(V,V')$ 
representing the form $a(t;\cdot,\cdot)$. Hence, given $u_0\in H$ 
and $\{f(t): 0<t<T\} \subset V'$, we consider the abstract 
parabolic problem
\begin{equation}\label{eq:IBVPOp}
\frac{\d}{\d t} u(t) + A(t)u(t) = f(t)\ \mbox{in}\ V'\,, \quad u(0) = u_0\ \mbox{in}\ H\,.
\end{equation}
This can be interpreted as an initial-value ODE in function space, which 
corresponds to what is done in semigroup theory for example (e.g.\ \cite{Pazy}).

We opt instead for 
the \emph{space-time operator formulation} of IBVP \eqref{eq:IBVPOp} 
which is based on the Bochner spaces 
\begin{equation}\label{eq:Bochner}
\calX = L^2(I;V) \cap H^1(I;V')\,, \quad 
\calY = L^2(I;V) \times H\,.
\end{equation}
The intersection space $\calX$ is endowed with the sum-type norm
$$ 
\| v \|_\calX := \left( \| v \|_{L^2(I;V)}^2 + \Big\| \frac{\d v}{\d t}\Big\|_{L^2(I;V')}^2 \right)^{1/2},
$$
and, for $v = (v_1,v_2) \in \calY$, the norm on $\calY$ is defined as
$$
\| v \|_{\calY} :=  \left( \| v_1 \|_{L^2(I;V)}^2 + \| v_2 \|_H^2 \right)^{1/2}.
$$
With these spaces in place, 
the weak form of the space-time operator $B$ in \eqref{eq:ParIBVP} is 
$B \in \calL(\calX,\calY')$ which corresponds, 
for given diffusion coefficient ${\bf A}\in \calD(a_-,a_+)$  
to the bilinear form $b({\bf A}; \cdot,\cdot):\calX\times \calY\to \mathbb{R}$ 
given by
$$
b\big(w;(v_1,v_2)\big) 
:= 
\int_I \Big\langle \frac{\d w}{\d t}(t), v(t) \Big\rangle_H + a\big(t;w(t),v(t)\big) \d t
+
\langle w(0), v_2 \rangle_H 
$$
and the data $f \in \calY'$ and $u_0 \in H$ are combined into the 
``load functional'' $\ell\in \calY'$ defined by 
$$
\ell(v) := \int_I \langle f(t), v_1(t)\rangle_H \d t +  \langle u_0,v_2 \rangle_H\,, 
\quad v = (v_1,v_2) \in \calY\,.
$$
The space-time operator formulation of the IBVP \eqref{eq:ParIBVP} then reads: 
given an admissible diffusion coefficient ${\bf A} \in \calD(a_-,a_+)$ 
and a pair $(f,u_0) \in \calY'$, 
find $u_{\bf A} \in \calX$ such that
\begin{equation}\label{eq:IBVPweak}
b({\bf A}; u, v) = \ell(v) \quad \forall v\in \calY\,.
\end{equation}
This equation is uniquely solvable in virtue of 
$b(\cdot; \cdot , \cdot)$ satisfying the 
uniform inf-sup condition: 
there is a constant $\gamma>0$ (depending on $a_\pm$, $T$, and $D$) 
such that for every diffusion coefficient ${\bf A} \in \calD(a_-,a_+)$
there holds (see, e.g., \cite[Appendix]{ScSt09} for a proof; it is 
straightforward from inspection of this argument that the constant
$\gamma>0$ is uniform with respect to ${\bf A} \in \calD(a_-,a_+)$)
\begin{equation}\label{eq:Parinfsup}
\inf_{0\ne w\in \calX} \sup_{0\ne v\in \calY} \frac{b({\bf A}; w,v)}{\|w \|_\calX \|v\|_\calY} \geq  \gamma
\;\; \mbox{and} \; \;
\inf_{0\ne v\in \calY} \sup_{0\ne w\in \calX} \frac{b({\bf A}; w,v)}{\|w \|_\calX \|v\|_\calY} \geq  \gamma\,.
\end{equation}
Equivalently, for every diffusion coefficient ${\bf A} \in \calD(a_-,a_+)$,
the underlying operator
$$
B_{\bf A} \in \calL_{\iso}(\calX,\calY') 
$$
is boundedly invertible.
Whence, 
for every diffusion matrix ${\bf A} \in \calD(a_-,a_+)$, and 
for every data $(f,u_0) \in \calY'$, 
there exists a unique solution $u_{\bf A} \in \calX$ of the (well-posed) 
IBVP in the space-time operator equation form \eqref{eq:IBVPweak}.
%%%%%%%%%%%%%%%%%%%%%%%%%%%%%%%%%%%%%%%%%%%%%%%%%%%%%%%%%%%%%%%%%%%%%%%%%
\subsubsection{Analyticity of the data-to-solution map}
\label{sec:AnDat2Sol}
%%%%%%%%%%%%%%%%%%%%%%%%%%%%%%%%%%%%%%%%%%%%%%%%%%%%%%%%%%%%%%%%%%%%%%%%%
\begin{theorem} \label{thm:ParAnl}
The data-to-solution map 
\begin{equation}\label{eq:ParabSol}
{\mathrm S}: \calD(a_-,a_+) \times \calY' \to \calX: ({\bf A}, f, u_0) \mapsto u 
\end{equation}
is analytic.
\end{theorem}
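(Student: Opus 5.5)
We follow the same three-step composition argument as in the proof of Theorem~\ref{thm:regu}(ii), now in the space-time setting \eqref{eq:Bochner}. First, we show that the map
\[
\calD(a_-,a_+) \to \calL(\calX,\calY'):\ {\bf A}\mapsto B_{\bf A}
\]
is the restriction of a bounded affine map from $L^\infty(D;\mathbb{R}^{d\times d}_{\sym})$ into $\calL(\calX,\calY')$. Indeed, $b({\bf A};w,v)$ splits into the ${\bf A}$-independent part $\int_I\langle \tfrac{\d w}{\d t},v_1\rangle\d t+\langle w(0),v_2\rangle_H$ and the part $\int_I\int_D \langle {\bf A}\nabla w,\nabla v_1\rangle\d{\bf x}\d t$. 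The latter is linear in ${\bf A}$ and bounded by $\|{\bf A}\|_{L^\infty}\|w\|_{L^2(I;V)}\|v_1\|_{L^2(I;V)}\le \|{\bf A}\|_{L^\infty}\|w\|_\calX\|v\|_\calY$. For the remaining part we use the continuous embedding $\calX\hookrightarrow C(\overline I;H)$, which controls the trace $w(0)$. Hence ${\bf A}\mapsto B_{\bf A}$ is continuous affine, in particular analytic. The same bound holds for complex-valued ${\bf A}$ in a neighbourhood of $\calD(a_-,a_+)$, which we use in the next step.

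Second, we invoke the uniform inf-sup condition \eqref{eq:Parinfsup}. It gives $B_{\bf A}\in\calL_{\iso}(\calX,\calY')$ with $\|B_{\bf A}^{-1}\|\le\gamma^{-1}$ uniformly for ${\bf A}\in\calD(a_-,a_+)$. The inversion map $\mathrm{inv}:\calL_{\iso}(\calX,\calY')\to\calL_{\iso}(\calY',\calX)$ is analytic on the open set of isomorphisms. Around any $B_0$ we have the Neumann series
\[
(B_0+E)^{-1}=\sum_{k\ge0}(-B_0^{-1}E)^kB_0^{-1}\,,
\]
which converges for $\|E\|<\|B_0^{-1}\|^{-1}$. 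Because of the uniform bound, the radius of convergence is at least $\gamma$ at every point of the image of $\calD(a_-,a_+)$. Consequently ${\bf A}\mapsto B_{\bf A}^{-1}$ is analytic on a neighbourhood of $\calD(a_-,a_+)$ of uniform width, roughly $\gamma$ divided by the embedding constant from the first step.

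Third, the load map $(f,u_0)\mapsto\ell$ is linear and bounded into $\calY'$, and $\mathrm{apply}:(B^{-1},\ell)\mapsto B^{-1}\ell$ is bounded bilinear, hence analytic. Composing these analytic maps yields analyticity of ${\mathrm S}$ in \eqref{eq:ParabSol}.

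The main obstacle is the first step. One has to check that the inf-sup constant in \eqref{eq:Parinfsup} depends only on $a_\pm$, which is what allows a uniform Neumann-series radius. One also has to confirm that $\calD(a_-,a_+)$ sits inside an open subset of $L^\infty$ on which $B_{\bf A}$ stays invertible, since $\calD(a_-,a_+)$ itself is not open. The perturbation argument handles the second point: every ${\bf A}'$ with $\|{\bf A}'-{\bf A}\|_{L^\infty}$ small enough, including complex or indefinite perturbations, still yields an isomorphism, so analyticity is understood on this open neighbourhood.
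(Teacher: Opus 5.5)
Your proposal is correct and follows essentially the paper's composition argument: the data-to-operator map ${\bf A}\mapsto B_{\bf A}$ is affine, inversion is analytic via a Neumann series under the uniform inf-sup bound \eqref{eq:Parinfsup}, and application is bilinear. You add useful care on the affine (not linear) dependence, on boundedness via the trace embedding, on joint analyticity, and on analyticity on an open neighbourhood of $\calD(a_-,a_+)$. One small slip: the width of that neighbourhood is governed by $\gamma$ together with the bound $\|B_{\bf A}-B_{{\bf A}'}\|\le\|{\bf A}-{\bf A}'\|_{L^\infty}$, not by the embedding constant of $\calX\hookrightarrow C(\overline{I};H)$, which only enters the ${\bf A}$-independent part.
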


\begin{proof}
We sketch proof, which proceeds along the lines of the argument in 
the proof for the elliptic case, Theorem~\ref{thm:regu}.

The map ${\mathrm S}$ is linear with respect to the arguments $f$ and $u_0$, 
hence analytic in these components. 
We fix therefore $f$ and $u_0$ and consider only the section  
$$
{\mathrm S}|_{\calD} : \calD(a_-,a_+) \to \calX: {\bf A} \mapsto u\,.
$$
We observe that \eqref{eq:Parinfsup} implies that 
\begin{enumerate} 
\item 
the data-to-operator map ${\bf A} \mapsto B_{\bf A}$ is linear, 
hence analytic,
\item 
for all ${\bf A}\in \calD(a_-,a_+)$, $B_{\bf A}$ is boundedly invertible
due to the uniformity with respect to ${\bf A}\in \calD(a_-,a_+)$ 
of the inf-sup conditions \eqref{eq:Parinfsup},
\item 
the inversion map 
${\rm inv}: \calL_{\iso}(\calX,\calY') \to \calL_{\iso}(\calY',\calX) :  B_{\bf A} \mapsto B_{\bf A}^{-1}$ 
is analytic,
\item 
for $C\in \calL(\calY',\calX)$ the evaluation map 
$$
{\rm apply}: \; \calL(\calY',\calX)\times \calY' \to \calX: (f,u_0) \mapsto u = C(f,u_0)
$$
is linear, hence analytic.
\end{enumerate}
We conclude that for fixed $u_0$ and $f$, 
on $\calD(a_-,a_+)$ 
the data-to-solution map 
$$
{\mathrm S}|_{\calD}: {\bf A} \mapsto u = {\rm apply}\big({\rm inv} (B_{\bf A}),(f,u_0)\big) 
$$
is a composition of analytic maps, hence real analytic.
This and the linearity of $\calS$ with respect to $f$ and $u_0$ 
imply analyticity of \eqref{eq:ParabSol}.
\end{proof}
%
%%%%%%%%%%%%%%%%%%%%%%%%%%%%%%%%%%%%%%%%%%%%%%%%%%%%%%%%%%%%%%%%%%%%%%%%%
\subsubsection{Analyticity of the domain-to-solution map}
\label{sec:AnSh2Sol}
%%%%%%%%%%%%%%%%%%%%%%%%%%%%%%%%%%%%%%%%%%%%%%%%%%%%%%%%%%%%%%%%%%%%%%%%%
Theorem~\ref{thm:ParAnl} implies 
the shape-analyticity of the parabolic IBVP \eqref{eq:ParIBVP}
for \emph{time-independent domain transformations} 
as introduced in Section~\ref{sec:PrbFrm}. 
For $D_{\refd}$ as in \eqref{eq:ParDom} with ${\bf V}$ as in Assumption~\ref{ass:V}, 
we denote by $\calX({\bf V})$ and $\calY({\bf V})$ the Bochner spaces 
with respect to the spatial domain ${\bf V}(D)$ in \eqref{eq:ParDom}.

\begin{theorem}\label{thm:AnSh2Sol}
Consider the 
parabolic initial-boundary value problem \eqref{eq:ParIBVP}, \eqref{eq:IniBdry}
on the parametric family $\{ [0,T] \times D_{{\bf y}} : {\bf y}\in \Box \}$ 
of domains, with the parametric domain $D_{{\bf y}}$ as 
defined in \eqref{eq:ParDom}.
Suppose that Assumption~\ref{ass:V} and 
the uniformity condition \eqref{eq:uniformity} hold.

Then, 
for each ${\bf y} \in \Box$ the IBVP \eqref{eq:ParIBVP}, \eqref{eq:IniBdry}
in $[0,T] \times D_{{\bf y}}$ 
admits a unqiue solution $u_{{\bf y}} \in \calX_{{\bf y}}$.
The dependence of $u({\bf V})$ on the vector field ${\bf V}_{{\bf y}}$ 
as in Assumption~\ref{ass:V} is analytic.

For the affine-parametric family \eqref{eq:KL}, 
the corresponding parametric family
$\{ \hat{u}_{{\bf y}} = u \circ {{\bf V}_{{\bf y}}^{-1}} \in \calX_{\refd} : {\bf y}\in \square\}$
of solution pullbacks to $[0,T] \times D_{\refd}$ 
depends analytically on $y_j \in {\bf y}$.
I.e., 
the pullback $\hat{u}_{{\bf y}}$ to $[0,T] \times D_{\refd}$ 
satisfies, 
with $\calX_{\refd}$ in place of $\calX$ as in \eqref{eq:Bochner} on $[0,T]\times D_{\refd}$
and 
with the sequence $\bgamma \in (0,\infty)^{\mathbb{N}}$ in \eqref{eq:gammak},
that
there is a constant $\rho>0$ such that
\[
\forall {\bf y} \in \square\; \forall \balpha \in \calF: \quad
\| \partial^\balpha_{\bf y} \hat{u}_{{\bf y}} \|_{\calX_{\refd}} 
\lesssim 
|\balpha|!\rho^{|\balpha|}\bgamma^\balpha\,.
\]
\end{theorem}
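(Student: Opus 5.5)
The plan follows the three-step roadmap used for Theorem~\ref{thm:regu}: pull back to $[0,T]\times D_\refd$, show the parameter-to-data map is analytic, invoke Theorem~\ref{thm:ParAnl} for the data-to-solution map, and compose.

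\emph{Pullback to the reference domain.} Since ${\bf V}_{\bf y}$ is time-independent, the change of variables ${\bf x}={\bf V}_{\bf y}(\hat{\bf x})$ acts only in space. It maps $H^1_0(D_{\bf y})$ isomorphically onto $H^1_0(D_\refd)$ and $L^2(D_{\bf y})$ onto $L^2(D_\refd)$. By \eqref{eq:uniformity} and \eqref{eq:Jbound}, the norm equivalence constants are bounded uniformly in ${\bf y}$.

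The only subtlety is the pivot space. The $L^2(D_{\bf y})$ inner product pulls back to the weighted product $\int_{D_\refd}\hat u\hat v\det{\bf J}_{\bf y}\,\d\hat{\bf x}$. I therefore write the transported problem as
\[
\int_I\big\langle \det{\bf J}_{\bf y}\,\partial_t\hat u,\hat v_1\big\rangle_{L^2(D_\refd)}
+\big\langle {\bf A}^\refd_{\bf y}\hat\nabla\hat u,\hat\nabla\hat v_1\big\rangle\,\d t
+\big\langle \det{\bf J}_{\bf y}\hat u(0),\hat v_2\big\rangle
=\hat\ell_{\bf y}(\hat v).
\]
Here ${\bf A}^\refd_{\bf y}$ is given by \eqref{eq:ParCoef}, or by its heterogeneous version from Section~\ref{sec:interfaces}, and $\hat\ell_{\bf y}$ contains $f^\refd_{\bf y}$ as in \eqref{eq:ParRHS} and $(u_0\circ{\bf V}_{\bf y})\det{\bf J}_{\bf y}$. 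Existence and uniqueness of $u_{\bf y}\in\calX_{\bf y}$ for each ${\bf y}$ follow directly from \eqref{eq:Parinfsup} applied on $D_{\bf y}$.

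\emph{Extending Theorem~\ref{thm:ParAnl} to the weight.} Theorem~\ref{thm:ParAnl} is stated without a weight, so I extend it to data $({\bf A},m,\ell)$. The weight $m$ ranges over $L^\infty(D_\refd)$ with $0<m_-\le m\le m_+$, and the same map is taken as $\square\to L^\infty$. The uniform inf-sup argument of \cite[Appendix]{ScSt09} goes through for the operator $B_{{\bf A},m}$, because $m\,\partial_t$ is still a positive, time-independent multiplier. Multiplication by $m$ induces an equivalent inner product on $H$, so the proof is unchanged up to constants depending on $m_\pm$. The data-to-operator map $({\bf A},m)\mapsto B_{{\bf A},m}$ is linear. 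The remaining steps, analyticity of ${\rm inv}$ via a Neumann series and bilinearity of ${\rm apply}$, are verbatim as before. Checking that this inf-sup constant is uniform is the step I regard as the main obstacle, modest as it is. A fallback is to absorb the weight: set $\tilde v_1=\det{\bf J}_{\bf y}\hat v_1$, which maps $L^2(I;V)$ isomorphically onto itself because $\det{\bf J}_{\bf y}\in C^0$ and $\nabla\det{\bf J}_{\bf y}$ is bounded. That bound would, however, need ${\bf V}_{\bf y}\in C^2$, so I prefer the weighted inf-sup argument.

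\emph{Parameter-to-data map.} As in step (i.) of the proof of Theorem~\ref{thm:regu}, ${\bf J}_{\bf y}$ is affine in ${\bf y}$ by \eqref{eq:KL}. Moreover, $\partial_{y_k}{\bf J}_{\bf y}$ has norm at most $\gamma_k$. Hence ${\bf y}\mapsto(\det{\bf J}_{\bf y},{\bf A}^\refd_{\bf y})$ is analytic into $L^\infty$, uniformly nonsingular by \eqref{eq:Jbound}, and lands in a fixed set $\calD(a_-,a_+)\times[m_-,m_+]$. To quantify this, I would extend ${\bf y}$ to complex polydiscs $|z_k|<\tau/(\rho\gamma_k)$ in the $\ell^\infty$ sense on finitely many coordinates at a time; smallness of $c_\bgamma$ keeps the complexified data in a complex neighborhood where ellipticity and invertibility persist. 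The map ${\bf y}\mapsto\hat\ell_{\bf y}$ is analytic provided $f$ and $u_0$ are analytic on the hold-all $\mathcal D$, which I add as the implicit data hypothesis.

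\emph{Conclusion.} Composition then gives a holomorphic extension of ${\bf y}\mapsto\hat u_{\bf y}\in\calX_\refd$ that is bounded uniformly on these polydiscs. Cauchy's integral formula in each active variable, or equivalently the composition argument of \cite[Sct.~2.4]{HSS}, then yields
\[
\|\partial^\balpha_{\bf y}\hat u_{\bf y}\|_{\calX_\refd}\lesssim|\balpha|!\,\rho^{|\balpha|}\bgamma^\balpha .
\]
Analyticity with respect to ${\bf V}$ itself follows from the same composition with ${\bf V}$ in place of ${\bf y}$.
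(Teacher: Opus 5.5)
Your route is the paper's: pull back to $[0,T]\times D_{\refd}$, note that the transported coefficient lies in a fixed class $\calD(a_{\refd,-},a_{\refd,+})$, and compose with the analytic data-to-solution map of Theorem~\ref{thm:ParAnl}. At one point you are more careful than the paper. Its proof asserts that the pulled-back operator is again of the form \eqref{eq:ParIBVP} with only ${\bf A}$ modified, and so it silently drops the factor $\det{\bf J}_{\bf y}$ in front of $\partial_t\hat u$ and in the initial pairing. You rightly note that Theorem~\ref{thm:ParAnl} does not cover this. Your added analyticity hypothesis on $f,u_0$ is also needed.

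Your treatment of the weight, however, has a genuine gap. For $w\in\calX_{\refd}$ one only has $\partial_t w\in L^2(I;V')$. So the term $\int_I\langle m\,\partial_t w,v_1\rangle\,\d t$ means $\int_I\langle \partial_t w, m v_1\rangle_{V'\times V}\,\d t$, and this is defined only if $m v_1\in L^2(I;V)$. That is, $m$ must be a pointwise multiplier of $H^1_0(D_{\refd})$, essentially $m\in W^{1,\infty}$. Under Assumption~\ref{ass:V}, $m=\det{\bf J}_{\bf y}$ is merely continuous. For example, a single feature ${\boldsymbol\varphi}_1=\big(\int_0^{x_1}g(s)\,\d s,0,\dots,0\big)$ with $g$ continuous but not in $H^1_{\mathrm{loc}}$ gives $\det{\bf J}_{\bf y}=1+w_1y_1g(x_1)$, and then $B_{{\bf A},m}\notin\calL(\calX_{\refd},\calY_{\refd}')$.

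Your remark that $m$ induces an equivalent inner product on $H$ only gives well-posedness for each fixed $m$, in the Gelfand triple $V\subset L^2(D_{\refd};m\,\d{\bf x})\subset V'$. The trial space of that triple is $\{w\in L^2(I;V):\partial_t(mw)\in L^2(I;V')\}$, which depends on $m$. So there is no fixed $\calL_{\iso}(\calX_{\refd},\calY_{\refd}')$ in which $({\bf A},m)\mapsto B_{{\bf A},m}$ is bounded and linear and in which the Neumann series can be run. Consequently the bound on $\|\partial^\balpha_{\bf y}\hat u_{\bf y}\|_{\calX_{\refd}}$ does not follow.

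Your preferred route therefore needs exactly the $C^2$ regularity you attribute only to the fallback. The argument closes under the strengthened hypothesis ${\bf V}_{\bsnul},{\boldsymbol\varphi}_k\in C^2(\overline{D_{\refd}};\R^d)$ with $(w_k\|{\boldsymbol\varphi}_k\|_{C^2})_{k}\in\ell^1(\N)$. Then:
${\bf y}\mapsto\det{\bf J}_{\bf y}\in W^{1,\infty}(D_{\refd})$ is holomorphic;
multiplication by $m^{\pm1}$ is an isomorphism of $V$ and of $V'$;
$B_{{\bf A},m}\in\calL_{\iso}(\calX_{\refd},\calY_{\refd}')$ depends linearly and boundedly on $({\bf A},m)\in L^\infty\times W^{1,\infty}$, with a uniform inf-sup constant.
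Either of your routes then works, with $\bgamma$ in the final estimate replaced by these $C^2$-based weights.

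A smaller point concerns the polydiscs. If the radii are the same multiple of $1/\gamma_k$ in every active coordinate, then $\sum_k\gamma_k|z_k-y_k|$ is not kept small when many coordinates are active. Take instead $\balpha$-dependent radii $\tau\alpha_k/(|\balpha|\gamma_k)$ on the support of $\balpha$. Then the perturbation of ${\bf J}_{\bf y}$ stays below $\tau$, and Cauchy's estimate gives $|\balpha|!\,(e/\tau)^{|\balpha|}\bgamma^{\balpha}$.
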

\begin{proof}
Observe that the pull back to $D_{\refd}$ of the parabolic operator $B_{\refd}$ 
has the form \eqref{eq:ParIBVP} with parameter sequence 
${\bf y}\in \square$ and with diffusion coefficient ${\bf A}$ given by
$$
{\bf A}_{\bf y} \leftarrow 
{\bf J}_{\bf y}({\bf x})^{-1}  ({\bf A}\circ {\bf V}_{\bf y})({\bf x}) {\bf J}_{\bf y}({\bf x})^{-\intercal}
\det{\bf J}_{\bf y}({\bf x})\,,
\quad 
{\bf x}\in D_{\refd}\,.
$$
In particular,
${\bf A}_{\bf y}$ belongs to the data class
$\calD(a_{\refd,-} , a_{\refd,+})$,
where the constants 
$0< a_{\refd,-} \leq a_- \leq a_+ \leq a_{\refd,+} <\infty$ 
depend on $a_{\pm}$ in \eqref{eq:Da+a-} 
and on $\overline{\sigma}, \underline{\sigma}$ in \eqref{eq:Jbound}.
The conclusion now follows from Theorem~\ref{thm:ParAnl}.
\end{proof}
Remark~\ref{rmk:Gs} also applies for the parabolic PDE \eqref{eq:ParIBVP}\&\eqref{eq:IniBdry}:
If the domain transformations are Gevrey-$s$ regular, the parameter-to-solution map 
will likewise be Gevrey-$s$ regular as a consequence of Theorem~\ref{thm:ParAnl} 
and the Gevrey-composition theorem \cite[Thm.\ 2.3]{HSS}. A parametric 
regularity result of this sort was recently obtained in \cite{ana}.

%==========================================================
\section{Neural and spectral operator surrogates}
\label{sec:NeurSpcOpSr}
%==========================================================
Having established holomorphy of data-to-solution maps 
for several PDE and BIE models, 
we show that these maps can be `learned' by finite-parametric, 
numerically accessible surrogates $\calG_N$, 
to any prescribed accuracy. 
We establish in addition expression rates for 
spectral and neural surrogates.
To this end, we recap recent results from \cite{HSZ} 
in a form related to the present setting,
particularly to the affine-parametric PCA shape encodings \eqref{eq:KL} 
of the vector field ${\bf V}_{{\bf y}}$ occurring in the domain maps.
%==========================================================
\subsection{Shape encodings}
\label{sec:ParEnc}
According to Assumption~\ref{ass:V}, 
we consider parametric families of shapes
which are encoded via \eqref{eq:ParDom}--\eqref{eq:gammak}. 
Pullback of the PDE and the associate solution
to the reference domain via the affine-parametric 
vector field ${\bf V}_{{\bf y}}$ in \eqref{eq:KL}
provides an \emph{equivalent, parametric PDE on the 
nominal domain $D_{\refd}$},
with input data in a (subset of a separable, Hilbertian) data space $\calZ$,
parametrized by sequences ${\bf y} = (y_j)_{j\in\N} \in \square$.
See, e.g., \eqref{eq:para2data} and \eqref{eq:ParabSol}.

As explained in Section~\ref{sec:Expls}, for 
a wide range of operator equations,
standard (elliptic resp.~parabolic) PDE regularity 
will imply solution regularity corresponding to suitable data regularity.
To quantify it, we introduce scales 
$\{\calZ^s\}_{s\geq 0}$ , $\{ \calX^t \}_{t\geq 0}$ of subspaces, 
with the convention that $\calZ = \calZ^0$, $\calX = \calX^0$,
and the monotonicity $\calX^{t'} \subset \calX^{t}$ for $t' > t \geq 0$ etc.
%%%%%%%%%%%%%%%%%%%%%%%%%%%%%%%%%%%%%%%%%%%%%%%%%%%%%%%%%% 
\subsection{Representation systems}
\label{sec:BiOrth}
%%%%%%%%%%%%%%%%%%%%%%%%%%%%%%%%%%%%%%%%%%%%%%%%%%%%%%%%%%
Representation systems are bases or, more generally, 
countable frames in function spaces. 
We shall use these in the 
constructions of linear en- and decoders and corresponding pairs
of encoder/decoder maps. 
Specifically, 
we assume that shape is encoded via the parametric
vector field ${\bf V}_{\bf y}$ as specified in Assumption~\ref{ass:V}.
For the decoder construction, 
we will admit (possibly redundant) biorthogonal systems 
such as Riesz bases realized by
Multiresolution Analyses (MRAs) or by 
Finite Element or B-spline frames (e.g.\ \cite{HSS08}).
These can be realized numerically on 
hierarchies of unstructured, regular triangulations
via multi-level algorithms such as the so-called BPX preconditioner.
Other examples are Fourier bases, 
as used e.g.~in the Fourier Neural Operators, which are orthonormal
systems. 
See e.g.~\cite[Chpt.\ 2,5,6.3]{TriebelWavelets2008} 
for a general discussion and constructions of 
stable MRAs in function spaces on domains and manifolds.
%%%%%%%%%%%%%%%%%%%%%%%%%%%%%%%%%%%%%%%%%%%%%%%%%%%%%
  \subsubsection{Frames}
  \label{sec:Frames}
%%%%%%%%%%%%%%%%%%%%%%%%%%%%%%%%%%%%%%%%%%%%%%%%%%%%%
  Constructions of representation systems are often
  simplified when one insists on stability, 
  but relaxes the basis property, i.e.\ allows some redundancy.
  This leads to the concept of \emph{frames} \cite{ChrstnsenFramesBases2008,Heil}.
  It comprises biorthogonal wavelet bases as a
  particular case, and allows in particular also iterative realization
  on unstructured simplicial partitions of polyhedra via the so-called
  BPX multi-level iteration (see, e.g., \cite{HSS08}, and the orginal
  construction \cite{OswaldBPXP12d} due to P.\ Oswald in space dimension $d=2$,
  and subsequently in polyhedra in \cite{OswaldML94} and the references
  there. A frame property in $H^1_0(\domain)$ 
  of the subspace splittings furnished by the BPX iteration is also implicitly
  shown in \cite{HSS08}).

  \begin{definition} \label{def:Frame}
  Let $\calZ$ denote a real, separable Hilbert space.
  A collection $\bsPsi = (\psi_j)_{j\in\N}\subset \calZ$ 
  is called a \emph{frame for $\calZ$},
  if the \emph{analysis operator}
$$
\rF: \calZ \to \ell^2(\N)\,,\quad v\mapsto (\dup{v}{\psi_j})_{j\in\N}
$$
is boundedly invertible between $\calZ$ and ${\rm range}(\rF)\subseteq \ell^2(\N)$.
\end{definition}
The adjoint $\rF'$ of the analysis operator is 
the \emph{synthesis operator}, given by
\begin{equation}\label{eq:calFp}
\rF': \ell^2(\N) \to \calZ\,,\quad \bsv \mapsto \bsv^\intercal\bsPsi \,.
\end{equation}
\emph{Numerical stability of frames} is quantified by the \emph{frame bounds}
\be \label{eq:FrBd} 
\lambda_{\bsPsi} := \inf_{0\ne v \in \calZ} \frac{\| \rF v \|_{\ell^2}}{\| v \|_\calZ}\,,
\quad
\Lambda_{\bsPsi} := \| \rF \|_{\calZ \to \ell^2} 
                  = \sup_{0\ne v \in \calZ }  \frac{\| \rF v \|_{\ell^2}}{\| v \|_\calZ} \,.
\ee
\emph{Parseval frames} are frames $\bsPsi$ with ideal conditioning 
$\lambda_{\bsPsi} = \Lambda_{\bsPsi} = 1$.
\begin{remark}\label{rmk:FrBd}
    Since 
    $\|\rF'\|_{\ell^2\to\calZ } = \|\rF\|_{\calZ \to\ell^2}$, 
    \eqref{eq:FrBd} implies that for all $\bsv\in\ell^2(\N)$
    \[
      \normlr[\calZ]{\sum_{j\in\N}v_j\psi_j}^2
      = \norm[\calZ]{\rF'\bsv}^2
      \le \Lambda_{\bsPsi}^2\sum_{j\in\N}v_j^2 
      = \Lambda_{\bsPsi}^2 \norm[\ell_2]{\bsv}^2\,.
    \]
\end{remark}

The \emph{frame operator} 
$\rS := \rF'\rF: \calZ \to \calZ$ is 
boundedly invertible, self-adjoint and positive
(e.g.~\cite{ChrstnsenFramesBases2008,Heil}) 
with
$\| \rF' \rF \|_{\calZ \to \calZ} = \Lambda_{\bsPsi}^2$ 
and
$\| (\rF' \rF)^{-1} \|_{\calZ\to \calZ} = \lambda_{\bsPsi}^{-2}$,
\cite[Lemma 5.1.5]{Christensen}.
With the pseudoinverse $(\rF')^\dagger = \rF(\rF'\rF)^{-1}$ 
of the synthesis operator, 
$$
(\rF')^\dagger: \calZ \to \ell^2(\N)\,,
\quad f \mapsto \{ \langle f, \rS^{-1} \psi_j \rangle \}_{j\in \N}\,,
$$
the \emph{frame decomposition theorem} asserts that
every $f\in \calZ$ can be uniquely and stably reconstructed
from a corresponding sequence of frame coefficients via
$$
f 
= \rF'(\rF')^\dagger f 
= \sum_{j\in \N} \langle f,\rS^{-1}\psi_j \rangle \psi_j
= \sum_{j\in \N} \langle f,\psi_j\rangle \rS^{-1} \psi_j 
\,.
$$
This highlights the importance of
the collection
$\widetilde{\bsPsi}:= \rS^{-1}\bsPsi$. 
It
is a frame for $\calX$ which is referred to 
as the \emph{canonical dual frame} of $\bsPsi$.  
Its analysis operator is
$\widetilde{\rF} := (\rF')^\dagger = \rF (\rF'\rF)^{-1}$,
and its frame bounds
\eqref{eq:FrBd} are $\lambda_{\bsPsi}^{-1}$ and
$\Lambda_{\bsPsi}^{-1}$, respectively.

The frame decomposition theorem takes the form 
(e.g.~\cite{ChrstnsenFramesBases2008,Heil})
\[
  \rF' \widetilde{\rF} = I\quad\text{on } \calX\,.
\]
Whence every $v\in \calZ$ has a representation 
$v = \bsv^\intercal\bsPsi$ with $\bsv = \widetilde\rF(v)\in \ell^2(\N)$, 
and
\be\label{eq:FrStab} 
\Lambda_\bsPsi^{-1} \leq \frac{\| \bsv \|_{\ell^2}}{\|v\|_\calZ} \leq \lambda_{\bsPsi}^{-1} \,.  
\ee
Property \eqref{eq:FrStab} is equivalent to $\bsPsi$ 
being a frame for $\calZ$ (see, e.g., \cite[Thm.~8.29 (b)]{Heil}).  

\begin{remark}[Continuous-discrete equivalence]\label{rmk:CDE} 
Given a (generally nonlinear) map $\calG:\calZ \to \calX$ 
between the separable Hilbert space $\calZ$ encoding the shape and 
the solution space $\calX$ over $\R$
which are each endowed with frames $\bsPsi_\calZ$ and $\bsPsi_\calX$, 
respectively,
there exists a \emph{coordinate map} 
$\bG = \bG(\bsPsi_\calZ, \bsPsi_\calX):\ell_2(\N) \to \ell_2(\N)$ 
such that there holds 
\begin{equation}\label{eq:CDE}
\calG 
= \rF'_\calZ \circ \bG \circ (\rF'_\calX)^\dagger 
= \rF'_\calZ \circ \bG \circ \widetilde{\rF}_\calX
\,.
\end{equation}
Here and throughout, 
in case that the representation systems 
$\bsPsi_\calZ$ and $\bsPsi_\calX$ 
are clear from the context, 
we write $\bG$ in place of $\bG(\bsPsi_\calZ, \bsPsi_\calX)$.
\end{remark}
\begin{remark}[Uniqueness of coordinate sequence]
 \label{rmk:FramKer}
Unless
$\bsPsi$ is a Riesz basis (see below), 
the representation of $v\in \calX$ as
$v = \bsv^\intercal\bsPsi$ is generally not unique: 
there holds
$\ell^2(\N) = {\rm ran}(\rF) \oplus^\perp {\rm ker}(\rF')$ 
and
$\bsQ := \widetilde{\rF} \rF'$ is the orthoprojector onto
${\rm ran}(\rF)$ \cite{ChrstnsenFramesBases2008,Heil}.
\end{remark}
The frame-based shape-encoding and solution decoding 
covers a wide range of numerical approximations.
We use in Assumption~\ref{ass:V} 
the \KL-based shape encoding \eqref{eq:KL}, 
which is an orthonormal basis.
Admitting non-orthogonal affine-parametric representation 
such as \eqref{eq:KL} also covers 
certain B-splines 
with frame properties \cite[Chpt.~6]{ChrstnsenFramesBases2008}.

%%%%%%%%%%%%%%%%%%%%%%%%%%%%%%%%%%%%%%%%%%%%%%%%%%%%%%%%%%%%%%%%%%%%%%%%%%%%%%%%%%%%%%%%%%%%%%
\subsubsection{Riesz bases}
\label{sec:RieszB}
%%%%%%%%%%%%%%%%%%%%%%%%%%%%%%%%%%%%%%%%%%%%%%%%%%%%%%%%%%%%%%%%%%%%%%%%%%%%%%%%%%%%%%%%%%%%%%
%
\begin{definition}[\cite{ChrstnsenFramesBases2008,Heil}]\label{def:RieszB}
  In a separable Hilbert space $\calX$, 
  a sequence $\bsPsi = (\psi_j)_{j\in \N}\subset \calX$ is a
  Riesz basis of $\calX$ if there exists a bounded bijective operator
  $A:\calX\to\calX$ and an $\calX$-orthonormal basis $(e_j)_{j\in\N}$ 
  such that $\psi_j=Ae_j$ for all $j\in\N$.
\end{definition}
Every Riesz basis of $\calX$ is a frame without redundancy:
the frame bounds \eqref{eq:FrBd} coincide with 
\emph{Riesz constants}
$0< \lambda_\bsPsi \leq \Lambda_\bsPsi <\infty$ 
such that
\[
\forall (c_j)_{j\in\N}\in\ell^2(\N): 
\quad 
\lambda_\bsPsi^2 \sum_{j\in\N} | c_j |^2 
\leq 
\left\|\sum_{j\in\N} c_j \psi_j \right\|_{\calX}^2 
\leq 
\Lambda_\bsPsi^2 \sum_{j\in\N} | c_j |^2 
\,.
\]
There holds  \emph{continuous-discrete equivalence}:
any $v\in \calX$ can be \emph{equivalently represented}
by the sequence 
$\bmc = (c_j)_{j\in \N}$ of its coefficients with respect to $\bsPsi$.

The canonical dual frame 
$\widetilde\bsPsi = (\widetilde\psi_j)_{j\in\N}$ of $\bsPsi$ 
is also a Riesz basis of $\calX$, which is referred to as the
\emph{dual basis} or the \emph{biorthogonal system} to $\bsPsi$, 
since for all $j\in\N$ and all $k\in\N$ holds $\dup{\psi_j}{\widetilde\psi_k}=\delta_{kj}$. 
We refer to \cite[Sct.~5]{Christensen} for a comprehensive treatment, and 
to \cite{DahmStevEBE,DavydovStevenson2006} for
constructions of piecewise polynomial Riesz bases for
Sobolev spaces in polytopal domains $\domain\subset {\mathbb R}^d$.
%%%%%%%%%%%%%%%%%%%%%%%%%%%%%%%%%%%%%%%%%%%%%%%%%%%%
\subsubsection{Orthonormal bases}
\label{sec:ONB}
%%%%%%%%%%%%%%%%%%%%%%%%%%%%%%%%%%%%%%%%%%%%%%%%%%%%
%
Orthonormal bases (ONBs) are particular instances of frames and Riesz
bases: 
if $\bsPsi$ is an orthonormal basis of $\calZ$, 
then $\widetilde\bsPsi=\bsPsi$. 
This covers ``principal-component'' shape-feature bases such as 
the shape encoding \eqref{eq:KL}
obtained
by principal component analyses associated with a covariance operator
corresponding to a Gaussian measure on $\calZ$ 
as commonly used in statistical learning theory (e.g.~\cite{SteinwartScovel12}).  
Such bases are generally not explicitly available, 
but may be approximately calculated in practice.
%
%%%%%%%%%%%%%%%%%%%%%%%%%%%%%%%%%%%%%%%%%%%%%%%%%%%%%%%%%%%
\subsection{Shape encoder and solution decoder}
\label{sec:EncDec}
%%%%%%%%%%%%%%%%%%%%%%%%%%%%%%%%%%%%%%%%%%%%%%%%%%%%%%%%%%%
In the following, we use the notation
  \begin{equation}\label{eq:psieta}
    \bsPsi_\calZ = (\psi_j)_{j\in\N}\,,\quad
    \widetilde\bsPsi_\calZ = (\widetilde\psi_j)_{j\in\N}\,,\quad
    \bsPsi_\calX = (\eta_j)_{j\in\N}\,,\quad
    \widetilde\bsPsi_\calX = (\widetilde\eta_j)_{j\in\N}
  \end{equation}
to denote frames and their canonical dual frames of $\calZ$ and of $\calX$, respectively.
With the corresponding analysis operators
$\rF_\calZ$, $\rF_\calX$,
the encoder/decoder pair in \eqref{eq:CDE} % -
is defined 
by the analysis and synthesis operators 
that are given by (cf.\ \eqref{eq:CDE})
\begin{equation}\label{eq:ED}
  \calE := {\widetilde\rF}_\calZ 
  = 
  \begin{cases}
    \calZ\to \ell^2(\N)\,,\\
    x\mapsto (\dup{x}{\widetilde\psi_j})_{j\in\N}\,,
  \end{cases}
  \quad
  \calD:=\rF_\calX'
  =
  \begin{cases}
    \ell^2(\N)\to \calX\,,\\
    (y_j)_{j\in\N}\mapsto \sum_{j\in\N}y_j\eta_j\,.
  \end{cases}  
\end{equation}
\begin{remark}
  If
  $\bsPsi_\calZ$, $\bsPsi_\calX$ are Riesz bases of $\calZ$, $\calX$,
  respectively, then the 
  shape encoder $\calE : \calZ \to\ell^2(\N)$ 
  and solution decoder $\calD :\ell^2(\N)\to\calX$ 
  in \eqref{eq:ED}, and their adjoints, are boundedly invertible. 
  The PCA type shape encoding in Section~\ref{sec:PrbFrm} is of this type.
\end{remark}
%%%%%%%%%%%%%%%%%%%%%%%%%%%%%%%%%%%%%%%%%%%%%%%%%%%%%%%%%%5
\subsection{Smoothness scales}
\label{sec:Scales}
%%%%%%%%%%%%%%%%%%%%%%%%%%%%%%%%%%%%%%%%%%%%%%%%%%%%%%%%%% 5
Our analysis will require 
subspaces of $\calZ$ and $\calX$, $\calY$
exhibiting ``extra smoothness'', corresponding
to smoother shape variations and solutions with higher regularity.
We formalize this with the positive, monotonically
decreasing weight sequence 
$\bsw=(w_j)_{j\in\N} \in (0,1]^\N$ from \eqref{eq:wght}
such that
$\bsw^{1+\varepsilon}\in\ell^{1}(\N)$ for all $\varepsilon>0$. 

With $\bsw$ and the representation systems \eqref{eq:psieta},
for $s, t\ge 0$
we introduce scales of Hilbert spaces
$\calZ^s\subset \calZ$, $\calX^t \subset \calX$ 
with norms
\begin{equation}\label{eq:XsYt}
  \norm[\calZ^s]{z}^2:=\sum_{j\in\N}\dup{z}{\widetilde{\psi}_{j}}^2 w_{j}^{-2s}\,,
  \qquad
  \norm[\calX^t]{x}^2:=\sum_{j\in\N}\dup{x}{\widetilde{\eta}_{j}}^2 w_{j}^{-2t}\,.
\end{equation}

  \begin{lemma} \label{lem:cXs}
    For $s\ge 0$,
    $\calZ^s=\set{z\in\calZ}{\norm[\calZ^s]{z}<\infty}$ is a Hilbert space
    with inner product
    $\dup{z}{z'}_{\calZ^s} = \sum_{j\in\N}\,\dup{z}{\widetilde\psi_j}\,\dup{z'}{\widetilde\psi_j}w_j^{-2s}$.
  \end{lemma}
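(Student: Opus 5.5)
The plan is to identify $\calZ^s$ isometrically with a weighted sequence space through the dual analysis operator $\widetilde{\rF}_\calZ$, and to transfer completeness from that space. Let
\[
\ell^2_{s} := \setlr{\bsv\in\ell^2(\N)}{\textstyle\sum_{j\in\N} v_j^2 w_j^{-2s}<\infty}\,,
\]
with the weighted inner product $\sum_j v_j v_j' w_j^{-2s}$. This is a Hilbert space, being $L^2$ of counting measure with weight $w_j^{-2s}$. Since $w_j\in(0,1]$, we have $w_j^{-2s}\ge 1$ for $s\ge0$, so $\ell^2_s\hookrightarrow\ell^2(\N)$ continuously with norm at most one. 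By definition, $\norm[\calZ^s]{z}=\norm[\ell^2_s]{\widetilde{\rF}_\calZ z}$.

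\emph{Step 1: inner-product structure.} The given bilinear form is well defined on $\calZ^s$ by Cauchy--Schwarz in $\ell^2_s$. It is symmetric and bilinear, and it induces $\norm[\calZ^s]{\cdot}$. Definiteness follows because $\widetilde{\rF}_\calZ$ is injective: it is the analysis operator of the canonical dual frame, whose lower frame bound is positive, so that $\norm[\ell^2]{\widetilde{\rF}_\calZ z}\ge \Lambda_\bsPsi^{-1}\norm[\calZ]{z}$ by \eqref{eq:FrStab}. Hence $\calZ^s$ is a pre-Hilbert space. Moreover, the estimate
\[
\norm[\calZ]{z}\le \Lambda_\bsPsi\norm[\ell^2]{\widetilde{\rF}_\calZ z}\le \Lambda_\bsPsi \norm[\calZ^s]{z}
\]
gives a continuous embedding $\calZ^s\hookrightarrow\calZ$.

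\emph{Step 2: completeness.} This is the main point. Let $(z_n)$ be Cauchy in $\calZ^s$. By Step 1 it is Cauchy in $\calZ$, so $z_n\to z$ in $\calZ$ for some $z\in\calZ$. At the same time, $\bsv_n:=\widetilde{\rF}_\calZ z_n$ is Cauchy in $\ell^2_s$ and therefore converges to some $\bsv\in\ell^2_s$. Convergence in $\ell^2_s$ implies coordinatewise convergence, and $z_n\to z$ in $\calZ$ gives $\dup{z_n}{\widetilde\psi_j}\to\dup{z}{\widetilde\psi_j}$ for every $j$. Hence $\bsv=\widetilde{\rF}_\calZ z$.

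It follows that $\norm[\calZ^s]{z}=\norm[\ell^2_s]{\bsv}<\infty$, so $z\in\calZ^s$. It also follows that $\norm[\calZ^s]{z_n-z}=\norm[\ell^2_s]{\bsv_n-\bsv}\to0$.

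An alternative route for Step 2 is to note that $\widetilde{\rF}_\calZ(\calZ^s)={\rm ran}(\widetilde{\rF}_\calZ)\cap\ell^2_s$. The range is closed in $\ell^2$, because the analysis operator has a bounded inverse on its range, so this intersection is closed in $\ell^2_s$. The argument above avoids having to discuss that closedness separately.

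The only subtle point is the identification of the limit $\bsv$ as $\widetilde{\rF}_\calZ z$ when $\bsPsi_\calZ$ is a redundant frame. The limits in $\calZ$ and in $\ell^2_s$ are a priori unrelated, and the coordinatewise comparison described in Step 2 is exactly what links them.
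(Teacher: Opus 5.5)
Your proof is correct and follows the same structure as the paper's: the inner-product properties are immediate, with definiteness coming from injectivity of $\widetilde{\rF}_\calZ$, and the only real content is completeness, which the paper does not prove itself but delegates to \cite[Lemma~1]{HSZ}. Your argument supplies that step in a self-contained way and handles redundant frames correctly: you embed $\calZ^s\hookrightarrow\calZ$ via the lower bound in \eqref{eq:FrStab}, and then identify the weighted-$\ell^2$ limit of $\widetilde{\rF}_\calZ z_n$ coordinatewise with $\widetilde{\rF}_\calZ z$.
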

  \begin{proof}
    Clearly $\dup{\cdot}{\cdot}_{\calZ^s}$ defines an inner product on
    the set $\calZ^s$ compatible with the norm $\norm[\calZ^s]{\cdot}$. 
    Then, $\calZ^s$ is closed with respect to this norm (see, e.g., 
    \cite[Lemma~1]{HSZ}).
\end{proof}
\begin{remark}\label{rmk:weighted}
  For ONBs 
  $\bsPsi_\calZ = (\psi_j)_{j\in\N}$ and
  $\bsPsi_\calX = (\eta_j)_{j\in \N}$ of $\calZ$ and $\calX$, 
  the sequences $(w_j^s\psi_j)_{j\in\N}$ and $(w_j^t\eta_j)_{j\in\N}$ 
  form ONBs of $\calZ^s$ and $\calX^t$, respectively.
\end{remark}
\begin{remark}\label{remk:SplWavD}
The biorthogonal spline wavelet bases constructed in \cite{DahmStevEBE} 
are continuous, piecewise polynomial,
and are stable in the Sobolev spaces $H^s(\domain)$ for $|s|< 3/2$. 
Scaling $\psi_j$ obtained in \cite{DahmStevEBE} so that the Riesz basis
property holds in $\calX = L^2(\domain)$, 
appropriate choices of the weight sequence $w_j$ provides 
in particular \eqref{eq:XsYt} with $0\leq s < 3/2$ in $\calX^s = H^s(\domain)$.
In addition, the wavelet constructions $\psi_j$ and $\widetilde{\psi}_j$ 
in \cite{DahmStevEBE}
can furnish wavelet systems with vanishing moments of any a-priori required 
polynomial order.
\end{remark}

With \eqref{eq:XsYt}, 
we quantify the smoothness of the parametric vector field 
${\bf y}\mapsto {\bf V}_{{\bf y}}$ in \eqref{eq:KL}
via weighted summability with respect to the weight sequence $\bsw$.
%
%==========================================================
\subsection{Error bounds for neural and spectral operator surrogates}
\label{sec:ErrBdNrSpcOp}
%==========================================================
Neural and spectral operators 
approximate maps $\calG$ from (subsets of) $\calZ$ into $\calX$. 
In the following denote
with $\bsPsi_\calZ$, $\bsPsi_\calX$ 
fixed frames of the separable Hilbert spaces $\calZ$, $\calX$.
 
We recall Assumption~\ref{ass:V} which 
stipulates an affine-parametric dependence of the vector field 
${\bf V}_{\bf y}: D_\refd \to D_{\bf y}$ generating admissible shapes $\bcalS$. 
As PCAs $({\boldsymbol\varphi}_k)_{k\in\N}$ as in \eqref{eq:KL} 
are ONBs as in Section~\ref{sec:ONB},
the shape-encoder $\calE$ is realized by innerproducts of 
${\bf V}$ with the principal components ${\boldsymbol\varphi}_j$ in \eqref{eq:KL}
in the inner product of $\calZ$. 
Accordingly, 
in \eqref{eq:ED} due to Assumption~\ref{ass:V} we choose
$\psi_j = \widetilde{\psi}_j = {\boldsymbol\varphi}_j$
and the shape-encoder $\calE:\calZ\to\ell^2(\N)$, 
and the solution-decoder $\calD:\ell^2(\N)\to\calX$ 
as in \eqref{eq:ED}.
For expression rate estimates, 
we require regularity of admissible shapes $\bcalS$.
We quantify this by imposing weighted summability 
on the parameter sequences $(y_j)_{j\in\N}$
furnished by the shape-encoder \eqref{eq:KL}.
To this end, for 
    $$
    U:=[-1,1]^\N\,,
    $$
    and 
    $\bsw$ as in Section \ref{sec:Scales}, for $s > \frac12$,
    and a scaling factor $r>0$, 
    we introduce the \emph{co-ordinate scaling map}
\begin{equation}\label{eq:sigma}
  \sigma_r^s : U\to \calZ : \bsy\mapsto r \sum_{j\in\N}w_j^s y_j {\boldsymbol\varphi}_j\,.
  \end{equation}
  The condition $s> \frac12$ ensures that the coefficient
  sequence $(rw_j^sy_j)_{j\in\N}$ belongs to $\ell^2(\N)$ so that
  $\sigma_r^s$ is well-defined as a mapping from $U$ to $\calZ$ (cp.~\eqref{eq:calFp}). 
  With the scaling map $\sigma^s_r$ in \eqref{eq:sigma},
  we introduce ``Cube'' subsets of $\calZ$ via
\[
  C_r^s(\calZ) := \{ \sigma_r^s(\bsy):\, {\bf y} \in U\}\,.
\]
The sets $C_r^s(\calZ)$ depend, via \eqref{eq:sigma}, 
on the representation system $\bsPsi = (\psi_j)_{j\in\N}$ 
and on the weight sequence $\bsw$.
For $s\geq 0$, 
$C_r^s(\calZ)$ gives rise to collections $\bcalS$ of admissible shapes
\begin{equation}\label{eq:Crs}
  \bcalS = \bcalS(D_\refd, \bsPsi, \bsw, r,s)  
:= 
\left\{ {\bf V}(D_\refd) \mid {\bf V}  \in C_r^s(\calZ) \right\} \,.
%: {\calE({\bf V})\in\bigtimes_{j\in\N}[-rw_j^{s},rw_j^{s}]\bigg\}
%  \\
%               &\;=
%  \setlr{{\bf V} \in \calZ}{\sup_{j\in\N}|\dup{{\bf V}}{\widetilde{\psi}_j}|w_j^{-s}\le r}.
\end{equation}
\begin{remark} \label{rmk:Cs_in_Xs_prime}
Let $s'\ge 0$ and $s>s'+1/2$.
Then
$C^s_r(\calZ) \subset \calZ^{s'}$, 
since for $a\in C^s_r(\calZ)$
\begin{equation*}
\| {\bf V} \|^2_{\calZ^{s'}}
=
\sum_{j\in\N}
\langle {\bf V}, {\boldsymbol\varphi}_j \rangle ^2 w_j^{-2s'}
\leq 
r^2 \sum_{j\in\N} w_j^{2(s-s')} <\infty\,, 
\end{equation*}
due to $(w_j)_{j\in\N}\in\ell^{1+\eps}(\N)$ for any $\eps>0$\,.
\end{remark}
We shall work under the \emph{assumption that the shape-to-solution map
$\calG$ allows a complex differentiable extension to some open superset of 
$\widetilde C_r^{s}(\calZ)$ in $\calZ_\C$}:

\begin{assumption}\label{ass:1}
  There exist $r>0$, $s>1$, $t>0$, $M<\infty$ and 
  an open set $O_\C\subseteq \calZ_\C$
  containing $C_r^{s}(\calZ)$ such that
  $\sup_{{\bf V} \in O_\C}\norm[\calX_\C^t]{\calG({\bf V} )}\le M$
  and $\calG:O_\C \to \calX_\C$ is holomorphic.
\end{assumption}
  Assumption \ref{ass:1} is for instance satisfied by 
  shape-holomorphic solution
  operators corresponding to second order elliptic PDEs, 
  especially for the specific examples given in Section~\ref{sec:Expls}.
  It is merely required with respect to the topology of $\calZ_\C$ 
  and not with respect to the stronger topology of $\calZ_\C^t$. 
  The assumed boundedness already implies holomorphy of 
  $\calG:O_\C\to\calX_\C^{t'}$ for any $t'\in [0,t)$.
%%%%%%%%%%%%%%%%%%%%%%%%%%%%%%%%%%%%%%%%%%%%%%%%%%%%%%%%%%%%%%%%%%%%%%%%%%%%%%%%%%%%%%%%%
\subsubsection{Worst-case error for NN operator surrogates}
\label{sec:WCENNOp}
%%%%%%%%%%%%%%%%%%%%%%%%%%%%%%%%%%%%%%%%%%%%%%%%%%%%%%%%%%%%%%%%%%%%%%%%%%%%%%%%%%%%%%%%%
Our first main result states that a
holomorphic operator $\calG$ as in Assumption \ref{ass:1} can be
uniformly approximated on $C_r^s(\calX)$ by a NN surrogate of the form
$\calD\circ\widetilde\bG\circ\calE$, where $\widetilde\bG$ is a ReLU NN.  
More precisely, $\widetilde\bG$ is a map of the form
\begin{equation}\label{eq:NN}
  {\widetilde\bG =} A_L\circ{\rm ReLU}\circ A_{L-1}\circ\cdots
  \circ A_{1}\circ {\rm ReLU}\circ A_0\,,
\end{equation}
where the application of ${\rm ReLU}(x):=\max\{0,x\}$ is understood
componentwise, and each $A_j:\R^{n_j}\to\R^{n_{j+1}}$ is an affine
transformation of the form $A_j(x)=W_jx+b_j$ with
$W_j\in\R^{n_{j+1}\times n_j}$, $b_j\in\R^{n_{j+1}}$.  
The weights $W_j$ and the biases $b_j$ determine the NN. 
The \emph{size $N$ of the NN in \eqref{eq:NN}} is
the number of nonzero entries of all $W_j$ and $b_j$, i.e.
\[
  N = {\rm size}(\widetilde\bG) := \sum_{0\leq j \leq L } \| W_j \|_0 + \| b_j \|_0 \,.
\]
\begin{remark}[Padding and Restriction]\label{rmk:padding}
  A NN $\widetilde\bG$ of the form \eqref{eq:NN}
  represents a map from $\R^{n_0}\to\R^{n_{L+1}}$.  
  We will also understand $\widetilde\bG$ as a map from $\ell^2(\N)\to\ell^2(\N)$. 
  To formalize, we introduce for $n\in \N$ the \emph{restriction map} 
$$
  \calR_n: \ell^2(\N) \to \R^n: (x_j)_{j\in\N} \mapsto (x_j)_{j=1,\ldots,n}
$$
and its adjoint, the extension by zero ``padding'' operation
$$
\calR_n': \R^n \to  \ell^2(\N): (x_j)_{j=1,\ldots,n} \mapsto (x_1,\ldots,x_n,0,0,\ldots)\,.
$$
With these operations, 
the approximator NN $\widetilde\bG$ in \eqref{eq:NN} 
becomes a mapping from $\ell^2(\N)\to\ell^2(\N)$, i.e.
$$ 
\widetilde{\bG}_N := \calR'_{n_L+1} \circ \widetilde{\bG} \circ \calR_{n_0}: \ell^2(\N) \to \ell^2(\N) \,.
$$
A \emph{Shape Transfer Operator Network} 
is obtained by combining $\widetilde{\bG}$ with shape encoder $\calE$ based on \eqref{eq:KL}
and PDE solution decoders, i.e.
\begin{equation}\label{eq:ONet}
\widetilde{\calG}_N 
:= 
\calD \circ \widetilde{\bG}_N \circ \calE 
= 
\calD \circ \calR'_{n_L+1} \circ \widetilde{\bG} \circ \calR_{n_0} \circ \calE : \calX\to \calY\,.
\end{equation}
\end{remark}
In the present setting, 
$\calE$ can be based for example 
on the PCA \eqref{eq:KL}, 
while the decoder 
$\calD\in \calL\big(\ell^2(\N),\calX\big)$ 
can be a frame-synthesis operator \eqref{eq:calFp}, 
as realized e.g. by 
B-splines or multi-level Galerkin Finite Element approximation in $D_\refd$.
The following result provides expression rate for such architectures.
\begin{theorem}
  \label{thm:main}
  Let Assumption \ref{ass:1} be satisfied with $r>0$, $s>1$, $t>0$.  
  Fix $\delta>0$ (arbitrarily small).
  Then there exists a constant $C > 0$ 
  such that
  for every $N\in\N$ there exists a ReLU NN
  $\widetilde{\bG}_N$ of size $O(N)$ such that
\[
    \sup_{{\bf V} \in C_r^s(\calZ)}\big\|\calG({\bf V}) - \calD\big({\widetilde{\bG}}_N(\calE({\bf V} ))\big)\big\|_{\calX}
    \le 
    C N^{-\min\{s-1,t\}+\delta}\,.
\]
\end{theorem}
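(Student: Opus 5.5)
The strategy is to reduce the statement to the abstract expression-rate result of \cite{HSZ} for holomorphic maps between Hilbert spaces with frame encoders and decoders. We then verify that the shape-to-solution setting satisfies its hypotheses, which is exactly what Assumption~\ref{ass:1} encodes. Concretely, we consider the parametric map
\[
u:U\to\calX,\qquad u(\bsy):=\calG(\sigma_r^s(\bsy)),
\]
and split the error into three parts: a decoder truncation error, a parametric (polynomial) approximation error in $\bsy$, and a ReLU emulation error. Since $\calE(\sigma_r^s(\bsy))=(rw_j^sy_j)_{j\in\N}$ (the ${\boldsymbol\varphi}_j$ being orthonormal), the network input recovers $\bsy$ by a diagonal rescaling. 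This rescaling is absorbed into the first affine layer $A_0$.

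\emph{Step 1 (holomorphy in $\bsy$).} Because $O_\C$ is open and contains the compact set $C_r^s(\calZ)$, we choose polyradii $\varrho_j=1+\kappa\,w_j^{-(s-1-\delta')}$ with $\kappa>0$ small. The complex extension of $\sigma_r^s$ then maps the polyellipses $\calE_{\boldsymbol\varrho}=\prod_j E_{\varrho_j}$ into $O_\C$, since
\[
\sum_j r w_j^s(\varrho_j-1)\lesssim \kappa\sum_j w_j^{1+\delta'}<\infty
\]
by \eqref{eq:wght}. This is where $s>1$ enters. It yields $(\boldsymbol b,\epsilon)$-holomorphy of $u$ into $\calX_\C^t$ with $b_j\sim w_j^{s-1-\delta'}$, bounded by $M$. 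Hence $\boldsymbol b\in\ell^p$ for every $p>1/(s-1)$.

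\emph{Step 2 (sparse polynomial approximation).} Standard Legendre/Taylor gpc theory for $(\boldsymbol b,\epsilon)$-holomorphic maps gives an $n$-term polynomial $u_n=\sum_{\nu\in\Lambda_n}c_\nu L_\nu(\bsy)$ with downward-closed $\Lambda_n$. It satisfies $\sup_{\bsy}\|u-u_n\|_{\calX^t}\lesssim n^{-(s-1)+\delta}$, with coefficients $c_\nu\in\calX^t$.

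\emph{Step 3 (decoder truncation).} Expand each $c_\nu$ in the frame coefficients $\dup{c_\nu}{\widetilde\eta_j}$ and truncate to $j\le m$. By \eqref{eq:XsYt} and Remark~\ref{rmk:FrBd}, the tail is bounded by $\Lambda\,w_m^{t}\,\|c_\nu\|_{\calX^t}$. Since $w_m\lesssim m^{-1+\delta}$, taking $m\sim N$ gives a truncation error of order $N^{-t+\delta}$ after summing over $\Lambda_n$. This sum must be controlled by a weighted summability of $\|c_\nu\|_{\calX^t}$, and we balance $m$ against $n$ so that the count of coefficients remains $O(N)$.

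\emph{Step 4 (ReLU emulation).} Emulate all $L_\nu$, $\nu\in\Lambda_n$, simultaneously by a ReLU network on $[-1,1]^{|\mathrm{supp}\,\Lambda_n|}$. We use the product and polynomial emulation results (Yarotsky-type, as in \cite{HSZ}) to accuracy $n^{-k}$, with size $O(n\,\mathrm{polylog}\,n)$ thanks to downward-closedness. The final linear layer applies the matrix $(\dup{c_\nu}{\widetilde\eta_j})_{\nu,j\le m}$. Padding and restriction as in Remark~\ref{rmk:padding} then give $\widetilde\bG_N$.

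The logarithmic factors are absorbed into $\delta$. Choosing $n\sim N/\log N$ gives total error $C N^{-\min\{s-1,t\}+\delta}$ with size $O(N)$. The main obstacle is Step 3, namely the joint sparsity count: the matrix $(\dup{c_\nu}{\widetilde\eta_j})$ has $|\Lambda_n|\cdot m$ entries, which is too large to keep the size $O(N)$. I would first try to use anisotropic per-$\nu$ truncation levels $m_\nu$ chosen from $\ell^p$-summability of $\|c_\nu\|_{\calX^t}$, following \cite{HSZ}.
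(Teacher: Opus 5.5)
Your opening reduction is how the paper handles this result. Theorem~\ref{thm:main} is stated without proof, as a transcription of the main theorem of \cite{HSZ} with $\calZ$ as input space. Assumption~\ref{ass:1} is that theorem's hypothesis verbatim, so the citation alone settles the statement; the paper's own effort goes into shape holomorphy for concrete PDEs in Section~\ref{sec:Expls}. If Steps 1--4 are meant to reproduce the \cite{HSZ} argument, however, Step 1 contains an error that breaks the rate. With $b_j\sim w_j^{s-1-\delta'}$ you only get $\boldsymbol b\in\ell^p$ for $p>1/(s-1)$. The standard sup-norm $n$-term rate $n^{-(1/p-1)}$ then gives at best $n^{-(s-2)+\delta}$ (nothing if $s\le 2$), not the $n^{-(s-1)+\delta}$ you assert in Step 2. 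A single fixed polyradius such as $\varrho_j=1+\kappa w_j^{-(s-1-\delta')}$ loses the same amount. The correct statement is $(\boldsymbol b,\varepsilon)$-holomorphy with $b_j=rw_j^s$. Every point of the Bernstein ellipse $E_\varrho$ lies within distance $\varrho-1$ of $[-1,1]$. Hence $\sum_j b_j(\varrho_j-1)\le\varepsilon$ places $\sigma_r^s(\prod_j E_{\varrho_j})$ in the $\varepsilon$-neighbourhood of the compact set $C_r^s(\calZ)$, and so in $O_\C$ for small $\varepsilon$. Then $\boldsymbol b\in\ell^p$ for every $p>1/s$. The $\nu$-dependent choice of polyradii gives $\theta_\nu:=\norm[L^\infty(U)]{L_\nu}\norm[\calX^t]{c_\nu}\in\ell^p(\calF)$, with $1/p-1$ arbitrarily close to $s-1$. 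This is where $s>1$ enters: you need $p<1$.

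Step 3 is the other missing piece, and your suggested per-$\nu$ truncation does close it. Truncating $c_\nu$ after $m_\nu$ frame coefficients costs $\sum_\nu m_\nu$ weights in the last layer. By Remark~\ref{rmk:FrBd} and monotonicity of $\bsw$, it adds at most $\Lambda_{\bsPsi}\sum_\nu\theta_\nu w_{m_\nu}^t$ to the sup-error. Order $\theta_{\nu_1}\ge\theta_{\nu_2}\ge\dots$, so that $\theta_{\nu_k}\lesssim k^{-1/p}$.
\begin{itemize}
\item If $t<s-1$, pick $p$ with $t<1/p-1$ and set $m_{\nu_k}\sim Nk^{-\beta}$ with $\beta=1/(p(t+1))>1$.
\item If $t\ge s-1$, keep only $k\le N$, with $m_{\nu_k}\sim (N/k)^{\alpha}$, $\alpha<1$, and $\alpha t$ slightly below $1/p-1$.
\end{itemize}
In both cases $\sum_k m_{\nu_k}=O(N)$. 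Using $w_m\lesssim m^{-1+\eps}$, the retained-plus-dropped error is $O(N^{-\min\{s-1,t\}+\delta})$. Equivalently, one keeps $O(N)$ index pairs $(\nu,j)\in\calF\times\N$, which is the form in which the count appears in Theorem~\ref{thm:main2}. With this, your Step 4 goes through as described.
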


Introduce the closed ball of radius $r$ in $\calZ^s$
\[
  B_r(\calZ^s) := \setlr{{\bf V} \in\calZ}{\norm[\calZ^s]{{\bf V}}\le r}.
\]
For any $\eps>0$, 
$ B_r(\calZ^s)\subseteq C_r^s(\calZ) 
  \subseteq B_{r_\eps}(\calZ^{s-\frac{1}{2}-\eps})$
with 
$r_\eps:=r\big(\sum_{j\in\N}w_j^{1+2\eps}\big)^{1/2}<\infty$
(cp.~\eqref{eq:XsYt}, \eqref{eq:Crs} and Remark~\ref{rmk:Cs_in_Xs_prime}). 
\begin{corollary}\label{cor:ball}
  Consider the setting of Theorem \ref{thm:main}. 
  Then there exists a constant $C>0$ such that for every $N\in \N$
  there exists a ReLU NN $\widetilde{\bG}_N$ of size $O(N)$ such that
\[
    \sup_{{\bf V} \in B_r(\calZ^s)} \big\|\calG({\bf V})-\calD\big((\widetilde{\bG}_N(\calE({\bf V} ))\big)\big\|_{\calX} 
    \le 
    C N^{-\min\{s-1,t\}+\delta}\,.
\]
\end{corollary}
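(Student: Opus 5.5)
The corollary follows from Theorem~\ref{thm:main} through the inclusion $B_r(\calZ^s)\subseteq C_r^s(\calZ)$ recorded just before it. The approach is to use the very same network $\widetilde{\bG}_N$ that Theorem~\ref{thm:main} provides for the cube $C_r^s(\calZ)$. The supremum of the error over the smaller set $B_r(\calZ^s)$ is then at most the supremum over the cube. Hence it is bounded by $C N^{-\min\{s-1,t\}+\delta}$ with the same constant $C$ and the same size $O(N)$.

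The only step that needs checking is the inclusion $B_r(\calZ^s)\subseteq C_r^s(\calZ)$. I would verify it directly from the definitions \eqref{eq:XsYt} and \eqref{eq:sigma}. Take ${\bf V}\in B_r(\calZ^s)$ and set $c_j:=\dup{{\bf V}}{\widetilde\psi_j}$; in the PCA setting these are $\dup{{\bf V}}{{\boldsymbol\varphi}_j}$. Since $\sum_j c_j^2 w_j^{-2s}\le r^2$, each single term obeys $|c_j|w_j^{-s}\le r$. Therefore $y_j:=c_j/(r w_j^s)$ lies in $[-1,1]$ for every $j$, so $\bsy\in U$. Because $\bsPsi_\calZ$ is an ONB, the frame decomposition gives ${\bf V}=\sum_j c_j{\boldsymbol\varphi}_j=\sigma_r^s(\bsy)$, and hence ${\bf V}\in C_r^s(\calZ)$.

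I expect no genuine obstacle here. The one point of care is that Assumption~\ref{ass:1} is stated with $C_r^s(\calZ)$ itself, so no enlargement of the holomorphy domain $O_\C$ is needed. If a general frame were used instead of an ONB, the reconstruction ${\bf V}=\sum_j c_j\psi_j$ with $c_j=\dup{{\bf V}}{\widetilde\psi_j}$ would still hold by the frame decomposition theorem. The argument would then go through unchanged.
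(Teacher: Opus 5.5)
Your proposal is correct and follows the paper's own argument. The corollary comes from Theorem~\ref{thm:main} by restricting the supremum to the subset $B_r(\calZ^s)\subseteq C_r^s(\calZ)$, an inclusion the paper records just before the statement. Your coordinatewise check ($|c_j|w_j^{-s}\le r$, so $y_j=c_j/(rw_j^s)\in[-1,1]$ and ${\bf V}=\sigma_r^s(\bsy)$) supplies the detail the paper leaves implicit.
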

%%%%%%%%%%%%%%%%%%%%%%%%%%%%%%%%%%%%%%%%%%%%%%%%%%%%%%%%
\subsubsection{Mean-square error for NN operator surrogates}
\label{sec:mnsq}
%%%%%%%%%%%%%%%%%%%%%%%%%%%%%%%%%%%%%%%%%%%%%%%%%%%%%%%%
We can improve the operator approximation rate of Theorem
\ref{thm:main}, if we measure the error in a mean-square sense. 
This error measure is natural
in context of regression learning of shape parametrizations from data.
Assume that $\bsPsi_\calX$ is a Riesz basis, and let
$\mu:=\otimes_{j\in\N}\frac{\lambda}{2}$ be the uniform probability
measure on 
$U:=\bigtimes_{j\in\N}[-1,1]$ equipped with its 
product Borel sigma algebra, where 
$\lambda$ stands for the Lebesgue measure in $\R$. 
The pushforward $(\sigma_r^s)_\sharp \mu$ of $\mu$ under $\sigma_r^s$
then constitutes a measure on $C_r^s(\calX)$, cf.~\cite[Rmk.~10]{HSZ}.
\begin{theorem} \label{thm:main_riesz}
  Assume that $\bsPsi_\calX$ is a Riesz basis.  
  Let Assumption \ref{ass:1} be satisfied with $r>0$, $s>1$, $t>0$. 
  Fix $\delta>0$ (arbitrarily small).
  Then there exists a constant $C > 0$ 
  such that
  for every $N\in\N$ there exists a ReLU NN
  $\widetilde{\bG}_N$ of size $O(N)$ such that
\[
    \normlr[L^2(C_r^s(\calZ),(\sigma_r^s)_\sharp\mu;\calX)]{\calG - \calD\circ {\widetilde{\bG}}_N \circ \calE}
    \le 
    C N^{-\min\{s-\frac{1}{2},t\}+\delta}\,.
\]
\end{theorem}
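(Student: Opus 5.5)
The argument follows the proof of Theorem~\ref{thm:main}, i.e.\ the scheme of \cite{HSZ}, with the uniform (sup-norm) polynomial approximation in the parameter replaced by an $L^2(U,\mu)$-orthogonal (Legendre) expansion. Under $\sigma_r^s$, the map $\calG$ becomes the parametric map $u:=\calG\circ\sigma_r^s:U\to\calX$. The norm to estimate is then
\[
\normlr[L^2(C_r^s(\calZ),(\sigma_r^s)_\sharp\mu;\calX)]{\calG-\calD\circ\widetilde\bG_N\circ\calE}
=\normlr[L^2(U,\mu;\calX)]{u-\calD\circ\widetilde\bG_N\circ\calE\circ\sigma_r^s}.
\]
Since $\psi_j=\widetilde\psi_j={\boldsymbol\varphi}_j$ is orthonormal, $\calE(\sigma_r^s(\bsy))=(rw_j^sy_j)_{j\in\N}$. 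The network therefore sees the rescaled coordinates, and its first affine layer can undo the scaling $y_j = (rw_j^s)^{-1}\calE(\cdot)_j$ for the finitely many $j$ it uses.

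\emph{Step 1 (holomorphy and coefficient summability).} Assumption~\ref{ass:1} gives, exactly as in the proof of Theorem~\ref{thm:main}, that $u$ is $(\bsb,\varepsilon)$-holomorphic on polyellipses with $b_j\sim w_j^{s}$. It takes values in $\calX^t$ with norm bounded by $M$. Expand $u=\sum_{\balpha\in\calF}u_\balpha L_\balpha$ in tensorized $L^2(\mu)$-normalized Legendre polynomials. The standard Cauchy-integral estimates yield $(\|u_\balpha\|_{\calX^t})_\balpha\in\ell^p(\calF)$ for every $p>1/s$, because $\bsb\in\ell^{p}$ whenever $ps>1$ by \eqref{eq:wght}. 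The difference from the worst-case setting is that we now use $\ell^2$-tail bounds (Parseval in $L^2(\mu)$) rather than $\ell^1$-tail bounds. Stechkin's lemma for the best $n$-term Legendre truncation $u_{\Lambda_n}$ over a downward closed set $\Lambda_n$ then gives
\[
\|u-u_{\Lambda_n}\|_{L^2(\mu;\calX)}\lesssim n^{-(1/p-1/2)}=n^{-(s-\frac12)+\delta}.
\]
This is where the half-order gain over Theorem~\ref{thm:main} comes from.

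\emph{Step 2 (decoder truncation).} Since $\bsPsi_\calX$ is a Riesz basis, coefficients can be truncated in the $\calX$-frame: keep only the first $m$ coordinates $\langle u_\balpha,\widetilde\eta_k\rangle$, $k\le m$. Using \eqref{eq:XsYt} and the Riesz bounds, the truncation error of each $u_\balpha$ is at most $\sup_{k>m}w_k^{t}\|u_\balpha\|_{\calX^t}\lesssim m^{-t+\delta}\|u_\balpha\|_{\calX^t}$, by monotonicity of $\bsw$ and $\bsw\in\ell^{1+\eps}$. The Riesz property is needed here so that tails in coefficient space control the $\calX$-norm, and so that the orthogonality of the $L_\balpha$ transfers to the coefficient $\ell^2$-norm. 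Summing these errors in $L^2(\mu)$ is controlled by $M$, which gives $m^{-t+\delta}$.

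\emph{Step 3 (ReLU emulation) and balancing.} Emulate each $L_\balpha$, $\balpha\in\Lambda_n$, by a ReLU network. Downward closed sets of size $n$ involve at most $O(n)$ active coordinates and degrees $O(\log n)$, so the product-network constructions used for Theorem~\ref{thm:main} realize all $L_\balpha$ simultaneously to accuracy $n^{-k}$ with size $O(n\,\mathrm{polylog}\,n)$. The output layer is the linear combination $\sum_\balpha \calR_m(\text{coeffs of } u_\balpha)L_\balpha$, which costs $O(nm)$ weights. The main obstacle is keeping the total size $O(N)$ while balancing $n$, $m$ and the emulation error. A naive choice $n\sim m$ gives only a square-root size, so I would reuse the multilevel allocation of \cite{HSZ}: for each $\balpha$, keep only $m_\balpha$ output coordinates, chosen according to $\|u_\balpha\|_{\calX^t}$. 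This distributes a budget $\sum_\balpha m_\balpha\lesssim N$ so that the combined error is $N^{-\min\{s-\frac12,t\}+\delta}$, with the polylog factors absorbed into $\delta$. The remaining verifications — that the emulation error in sup-norm dominates its $L^2(\mu)$-counterpart, and that the ReLU approximation stays bounded on $U$ — are routine.
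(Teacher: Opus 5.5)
Your proposal is correct and is essentially the argument of \cite{HSZ}; the paper defers to that reference for this theorem and gives no proof of its own. The shared steps are: pullback by $\sigma_r^s$ (using that the orthonormal PCA encoder gives $\calE(\sigma_r^s(\bsy))=(rw_j^sy_j)_{j}$), $\ell^p$-summability ($p>1/s$) of the $L^2(\mu)$-normalized Legendre coefficients in $\calX^t$ with $\ell^2$-tails as the source of the gain of $\frac12$, decoder truncation via the $\calX^t$-norm, and a multi-index-dependent output allocation $m_\balpha$ to keep the size $O(N)$. For the record, the allocation you leave implicit works with $m_\balpha\sim K a_\balpha^{2/(1+2t)}$ on $\{\balpha: K a_\balpha^{2/(1+2t)}\ge 1\}$, where $a_\balpha$ is a monotone $\ell^p$-majorant of $\|u_\balpha\|_{\calX^t}$; this balances $\sum_\balpha m_\balpha^{-2t}a_\balpha^2$ against $\sum_\balpha m_\balpha\lesssim N$. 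The $O(\log n)$ degree bound holds for these anisotropic thresholding sets, not for arbitrary downward closed sets, and your upper bounds use only the upper frame bound of $\bsPsi_\calX$, not its Riesz property.
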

%%%%%%%%%%%%%%%%%%%%%%%%%%%%%%%%%%%%%%%%%%%%%%%%%%%%%%%%%%%%%%%%%%%
\subsubsection{Worst-case error for spectral operator surrogates}
\label{sec:wcgpc}
Here, we replace the NN approximator $\widetilde\bG_N$ 
by a multivariate polynomial $p_N:\R^{n}\to\R^{m}$. 
The operator surrogate takes the form $\calD\circ p_N\circ \calE$, 
where the composition is
again understood as truncating the output of $\calE$ after the first $n$
parameters, and padding the output of $p_N$ with infinitely many zeros
(cp.~Rmk.~\ref{rmk:padding}). 
While spectral surrogates achieve the
same converence rate as NN-based approximators, 
for $p_N$ the proof is constructive (\cite{westermann2026performanceneuralpolynomialoperator,DNPS26_1162}):
one can explicitly compute $p_N$ as an interpolation polynomial,
based on a finite set of \emph{judiciously chosen} 
(rather than random, i.i.d.) input-output pairs.
Hence, the ``training'' 
consists of an explicit and deterministic construction,
rather than the minimization of a (typically non-convex)
loss by stochastic optimization methods.

\begin{theorem}[\cite{HSZ}]
  \label{thm:main2}
  Consider the setting of Theorem \ref{thm:main}. 
  Then
  there is a constant $C>0$ such that 
  for every $N\in\N$ there exists a multivariate polynomial $p_N$
  such that 
  \begin{equation*}
    \sup_{{\bf V} \in C_r^s(\calZ)}\big\|\calG({\bf V})-\calD\big(p_N(\calE({\bf V}))\big)\big\|_{\calX}
    \le 
     C N^{-\min\{s-1,t\}+\delta}\,.
  \end{equation*}
  Furthermore, $p_N$ belongs to an $N$-dimensional space of
  multivariate polynomials. Its components are interpolation
  polynomials, whose computation requires the evaluation of
  $\dup{\calG({\bf V})}{\widetilde{\eta}_j}$ 
  in at most $N$ tuples
  $({\bf V} , j)\in C_r^s(\calZ)\times\N$.
\end{theorem}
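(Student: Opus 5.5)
The plan is to reduce the statement to a polynomial approximation problem for the coordinate map on the parameter cube $U$, and then to construct $p_N$ explicitly by sparse interpolation.

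\emph{Step 1: coefficient functions.} Compose $\calG$ with the scaling map \eqref{eq:sigma}. Define $u := \calG\circ\sigma_r^s : U\to\calX$ and the coefficient functions $u_j(\bsy) := \dup{u(\bsy)}{\widetilde\eta_j}$, $j\in\N$. Since $\calE(\sigma^s_r(\bsy)) = (r w_k^s y_k)_{k\in\N}$, a polynomial in $\bsy$ is also a polynomial in $\calE({\bf V})$, obtained by the diagonal rescaling $y_k = w_k^{-s}\calE({\bf V})_k/r$. It therefore suffices to approximate $u$ uniformly on $U$ by $\sum_{j\le m} q_j(\bsy)\eta_j$ with polynomials $q_j$. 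By Remark~\ref{rmk:FrBd}, the $\calX$-error is then bounded by $\Lambda_{\bsPsi_\calX}$ times the $\ell^2$-error of the coefficient sequence.

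\emph{Step 2: holomorphy on polydiscs and coefficient decay.} By Assumption~\ref{ass:1}, $O_\C$ is open and contains the image of $U$ under the complexified affine map $\sigma^s_r$. Because $w_k^s\to0$, a compactness argument on $U$ yields the following: for every sequence $\boldsymbol\rho$ satisfying $\sum_k (\rho_k-1) w_k^s \le \kappa$ with small $\kappa>0$, the map $u$ extends holomorphically to the polyellipse $\mathcal{E}_{\boldsymbol\rho}$, with $\|u\|_{\calX^t_\C}\le M$ there. This is the standard $(\bsw^s,\kappa)$-holomorphy.

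Two decay statements then follow.
\begin{itemize}
\item Via Cauchy estimates and the Chebyshev/Legendre coefficient bounds of Cohen--DeVore--Schwab, the sequence $(\|c_\nu\|_{\calX^t})_{\nu\in\calF}$ of Chebyshev coefficients of $u$ is in $\ell^p(\calF)$ for all $p$ with $(w_k^s)\in\ell^p$, that is, for all $p>1/s$.
\item Because $\|u(\bsy)\|_{\calX^t}\le M$, the tail $\sum_{j>m}|u_j(\bsy)|^2$ is at most $M^2 w_m^{2t}\lesssim m^{-2t+\delta}$, using \eqref{eq:XsYt} and $w_m\lesssim m^{-1+\delta}$ (the latter follows from $\bsw^{1+\eps}\in\ell^1$ and monotonicity).
\end{itemize}

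\emph{Step 3: construction of $p_N$.} Choose $m\sim N$ output coordinates, and a downward-closed index set $\Lambda_N\subset\calF$ with $|\Lambda_N|\le N$ formed from the largest monotone majorants of the Chebyshev coefficient norms. Define $q_j := I_{\Lambda_N}u_j$ for $j\le m$, where $I_{\Lambda_N}$ is sparse Smolyak-type interpolation on Leja or Clenshaw--Curtis type point sets. These points are unisolvent for $\mathbb{P}_{\Lambda_N}$ and have Lebesgue constants growing at most polynomially, $|\nu|$-wise, in $|\Lambda_N|$. All components then lie in the common $N$-dimensional space $\mathbb{P}_{\Lambda_N}$, and the interpolation needs $N$ nodes $\bsy_i$, i.e.\ $N$ shapes ${\bf V}_i=\sigma^s_r(\bsy_i)$, at which we evaluate $\dup{\calG({\bf V}_i)}{\widetilde\eta_j}$. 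This matches the statement's "$N$ tuples" count up to the product with $m$. To bring the count down to exactly $N$ tuples, let the index set grow with $j$ and use separate sets $\Lambda_{N,j}$ whose cardinalities sum to $N$, allocated by the weights $w_j^{t}$. This is the balanced allocation of \cite{HSZ}.

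\emph{Step 4: error balance.} This is the main obstacle. The worst-case error splits into a truncation term $\lesssim N^{-t+\delta}$ and an interpolation term. Stechkin's lemma together with the Lebesgue constant bounds gives the interpolation term as $\lesssim N^{-(1/p-1)}$, with $1/p$ arbitrarily close to $s$, i.e.\ of order $N^{-(s-1)+\delta}$. The delicate part is controlling the Lebesgue constants of downward-closed sparse interpolation uniformly in dimension, while keeping the total count at $N$ when coefficient decay in $\calX^t$ is traded against decay in $\bsy$. I would first try the joint $\ell^p$-summability of the two-index sequence $(w_j^{t}\|c_\nu\|_{\calX^t})_{(j,\nu)}$, followed by a single greedy selection of $(j,\nu)$ pairs. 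This gives the rate $N^{-\min\{s-1,t\}+\delta}$.
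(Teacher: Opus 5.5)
Your Steps 1--2, and the interpolation set-up in Step 3, are sound and match the reduction behind the cited result of \cite{HSZ}; the paper itself only quotes that result. This set-up consists of the coefficient functions $u_j$, $(\bsw^s,\kappa)$-holomorphy, $\ell^p$-summability for $p>1/s$, the output tail bound $w_m^t$, and nested Leja/Smolyak interpolation. The gap is in Step~4, which is exactly where the exponent $\min\{s-1,t\}$ must be produced, and the mechanism you propose does not produce it.

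The majorant $|\dup{c_\nu}{\widetilde\eta_j}|\le w_j^t\norm[\calX^t]{c_\nu}$ discards the square-summability in $j$. The double sequence $(w_j^t\norm[\calX^t]{c_\nu})_{(j,\nu)}$ lies in $\ell^p(\N\times\calF)$ only if $\sum_j w_j^{tp}<\infty$. For the admissible weight $w_j=1/j$ this forces $p>1/t$. Greedy selection plus Stechkin then bounds the error by $N^{-(1/p-1)}$ with $1/p<\min\{s,t\}$, i.e.\ by $N^{-(\min\{s,t\}-1)+\delta}$. You lose a full order in $t$, and for $t\le 1$ you get no rate at all, whereas the theorem holds for every $t>0$.

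Keeping the $\ell^2$-sum over $j$ does not rescue an amplitude-based selection either. Thresholding $w_j^t a_\nu$ at level $\tau$ assigns roughly $(a_\nu/\tau)^{1/t}$ output terms to each retained $\nu$ and leaves an error of order $\tau$ for each of them. This balances to a bound of only about $N^{-t(1-1/s)}$ when $t<s$. The same objection applies to the allocation ``by the weights $w_j^t$'' in Step~3.

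The missing idea is to truncate the output expansion separately for each polynomial index. Apply your tail bound $\norm[\calX]{x-P_mx}\lesssim w_m^t\norm[\calX^t]{x}$, with $P_mx:=\sum_{j\le m}\dup{x}{\widetilde\eta_j}\eta_j$, to whole $\calX^t$-valued coefficients rather than to single coordinates.
\begin{itemize}
\item With the hierarchical surpluses $\Delta_\nu$ of the nested interpolation, $u=\sum_\nu\Delta_\nu u$, and $a_\nu:=\sup_{\bsy\in U}\norm[\calX^t]{\Delta_\nu u(\bsy)}$ is in $\ell^p(\calF)$ for all $p>1/s$. The Lebesgue constants only contribute a factor polynomial in $\nu$, which also settles your ``delicate part''.
\item For integers $m_\nu$ non-increasing in $\nu$, set $\Lambda_j:=\{\nu: m_\nu\ge j\}$, which is downward closed, and $q_j:=I_{\Lambda_j}u_j$. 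Then $\sum_j q_j\eta_j=\sum_\nu P_{m_\nu}\Delta_\nu u$.
\item Consequently the worst-case error is $\lesssim\sum_\nu w_{m_\nu}^t a_\nu$ (with $w_0:=1$), while the number of tuples is $\sum_j|\Lambda_j|=\sum_\nu m_\nu$.
\item Take the multilevel choice $m_\nu\approx N a_\nu^{q}/\sum_\mu a_\mu^{q}$ with $q=1/(1+t')$ and $t'<\min\{s-1,t\}$, applied to a monotone majorant of $a_\nu$. Together with $w_m\lesssim m^{-1+\delta}$, this gives error $\lesssim N^{-t'}$ with at most $N$ tuples, which is the asserted rate.
\end{itemize}
This sparse-tensor allocation, with frame truncation playing the role of a discretization level, is what yields $\min\{s-1,t\}$ rather than $\min\{s,t\}-1$.
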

\begin{remark}[Shape Transfer Learning]
\label{rmk:ShTrLrn}
The present results provide error bounds for so-called
``shape-transfer learning''. 
This describes the following setting:
Assume given domains 
$D_1 = \Phi_1(D_\refd)$ and $D_2 = \Phi_2(D_\refd)$
which are (assumed to be) diffeomorphic to one common 
reference domain $D_\refd$, which is ``latent'', i.e.,
it may never be physically realized.
Here, $\Phi_i$ denotes the solution-transport under parametric
vector fields ${\bf V}_i$ 
from one common reference domain $D_\refd$.

Assuming the parametrization in Section~\ref{sec:PrbFrm}, 
Assumption~\ref{ass:V}, we shall use $D_\refd = D_\bsnul$, and
apply the preceding theory twice. 
We thereby conclude that the solution transfer from $D_1$ to $D_2$ 
under 
$\Phi_{12} := \Phi_2 \circ \Phi_1^{-1}$
can be learned at the same rates
with the same ONet architecture.
\end{remark}
%==========================================================
\section{Conclusion}
\label{sec:Conclusion}
%==========================================================
Conclusions and generalizations of this work: 
We used analyticity in order to invoke the approximation
rate bounds for Operator Network emulations of holomorphic
maps between function spaces from \cite{HSZ}. 
Naturally, the argument in the proof of Theorem~\ref{thm:regu}
allows, with the composition lemma for Gevrey-$s$ regular maps
from \cite{HSS}, also Gevrey-$s$ smooth regularity of, 
e.g., the parameter-to-shape map ${\bf V}$ in Assumption \ref{ass:V}.
\begin{remark} \label{rmk:CplxCoeff} 
The results of this article can be
extended to the case where one or both of $\calZ$ and $\calX$ are separable Hilbert
spaces over the coefficient field $\C$. 
Complex $\calX$ is, for example, 
natural in time-harmonic acoustic or magnetic field modelling \cite{H3S,HSS24_1103}.
The Hilbert spaces $\calZ^s$ and $\calX^t$ are included in their 
(unique, see \cite{MunozCplxB}) complexified versions
$\calZ_\C^s = \{1,i\}\otimes \calZ^s$ and
$\calX_\C^t = \{1,i\}\otimes \calX^t$.
Encoder and decoder in \eqref{eq:ED} then 
act on weighted, complex sequence spaces.
\end{remark}
The presently developed, abstract framework provides expression rate bounds for 
shape neural operators. It covers many recent algorithmic approaches, as e.g.
the so-called ``Shape-DINO'' \cite{gong2026shapederivativeinformedneuraloperators} approach,
shape-FNOs 
\cite{LiHuangFNOGeo23,
li2023geometryinformedneuraloperatorlargescale,
loeffler2024graphfourierneuralkernels},
digital twins \cite{liu2024deepneuraloperatorenabled}
which will give rise in present setting to shape-derivative informed operator learning.
We admitted abstract frame-based shape encoders, which cover many practical constructions:
PCA, B-splines, IGA. 
Certain non-affine, non-linear encoders such as variational autoencoders
as used in the DIVA approach \cite{DIVA} require an extension of the present concept.

Concluding the list of contributions, 
the presently demonstrated
results are built on differential-geometric concepts of domains being
diffeomorphic to a (not necessarily physically realizable) \emph{reference or Lagrangian configuration},
a standard concept in Truesdellian continuum mechanics \cite{TruesdellV1,Valent}.
The present results are \emph{not covering} 
shape-representations given by unstructured, ``point cloud'' geometry data.
As discussed e.g.~in \cite{zeng2025pointcloudneuraloperator},
a mathematical analysis for these approaches will rely on measure-transport concepts,
see e.g.~\cite{li2025geometricoperatorlearningoptimal,LiHuangFNOGeo23}. 
Details in the presently considered, shape-holomorphic case, will be developed elsewhere.
Numerical experiments with detailed studies of expressivity of neural and spectral 
operator surrogates are available in \cite{westermann2026performanceneuralpolynomialoperator}.

\subsection*{Acknowledgement}
This research was funded by the Swiss National Science Foundation (SNSF) 
and the Vietnam National Foundation for Science and Technology Development 
(NAFOSTED) through the Vietnamese-Swiss Joint Research Project IZVSZ2\_229568.
%==========================================================
\bibliographystyle{plain}
\bibliography{bibl}
\end{document}